\documentclass[11pt]{article}

\usepackage[preprint]{acl}

\usepackage{times}
\usepackage{latexsym}

\usepackage{amsmath}
\usepackage{amsthm}
\usepackage{amssymb}
\usepackage{thm-restate} % restatable theorems
\usepackage[T1]{fontenc}
\usepackage[utf8]{inputenc}

\usepackage{microtype}

\usepackage{inconsolata}

\usepackage{graphicx}
\usepackage{subcaption}
\usepackage{booktabs}

\newcommand{\update}[1]{\textcolor{black}{#1}}

\usepackage{graphicx}
\usepackage{amsmath,amssymb,amsthm,latexsym}
\usepackage{thm-restate} % restatable theorems
\usepackage{multirow}
\usepackage{hyperref}
\usepackage{stmaryrd}

\usepackage{booktabs}
\usepackage[normalem]{ulem}
\useunder{\uline}{\ul}{}
\usepackage{multicol}
\usepackage{ifthen}
\usepackage{thmtools}
\usepackage{tabularx}
\usepackage{xspace}
\usepackage{bbm}
\usepackage[normalem]{ulem}
\usepackage{hyperref}
\usepackage{float}
\usepackage{xcolor}[table]
\definecolor{ai2pink}{HTML}{f0529c}
\definecolor{ai2midpink}{HTML}{fad3e5}
\definecolor{ai2lightpink}{HTML}{fbecf3}   
\definecolor{ai2midwhite}{HTML}{f2e5d9}
\definecolor{ai2green}{HTML}{0fcb8c}
\definecolor{ai2lightgreen}{HTML}{e7f9f3}
\definecolor{ai2darkgreen}{HTML}{105257}
\definecolor{ai2purple}{HTML}{B932EB}
\definecolor{ai2lightpurple}{HTML}{f7e8fc}

\usepackage{rotating}

\newtheorem{example}{Example}

\usepackage[most]{tcolorbox}
\usepackage{wrapfig}
\usepackage{graphicx}
\usepackage{array}
\usepackage{listings}
\definecolor{githubblue}{RGB}{49,46,138}
\definecolor{schemegreen}{RGB}{15,137,15}
\definecolor{operator}{RGB}{0,.3,.7}
\definecolor{azure}{rgb}{0,.3,.7}
\definecolor{comment_color}{rgb}{0.24, 0.51, 0.51}
\definecolor{applegreen}{rgb}{0.55, 0.71, 0.0}

\newcommand{\modellog}{\textcolor{gray}{\textsc{ModelLog}}\xspace}
\newcommand{\probCT}{\textcolor{gray}{\textsc{ProbCT}}\xspace}

\definecolor{arsenic}{rgb}{0.23, 0.27, 0.29}
\definecolor{deepcarmine}{rgb}{0.66, 0.13, 0.24}

\DeclareRobustCommand{\propvar}{%
  \text{\textcolor{blue}{$\mathsf{P}$}}
}

\DeclareRobustCommand{\formula}{%
  \text{\textcolor{blue}{$\mathsf{F}$}}
}

\DeclareRobustCommand{\logicor}{%
  \text{\textcolor{deepcarmine}{$\lor$}}
}
\DeclareRobustCommand{\logicand}{%
  \text{\textcolor{deepcarmine}{$\land$}}
}
\DeclareRobustCommand{\implication}{%
  \text{\textcolor{deepcarmine}{$\to$}}
}
\DeclareRobustCommand{\biconditional}{%
  \text{\textcolor{deepcarmine}{$\leftrightarrow$}}
}

\DeclareRobustCommand{\biconditional}{%
  \text{\textcolor{deepcarmine}{$\leftrightarrow$}}
}
\tcbuselibrary{listings}%
\tcbset{listing engine={listings}}

\usepackage{colortbl}
\definecolor{halfgray}{gray}{0.55}
\definecolor{ipython_frame}{RGB}{207, 207, 207}
\definecolor{deepblue}{rgb}{0,0,0.5}
\definecolor{deepred}{rgb}{0.6,0,0}
\definecolor{deepgreen}{rgb}{0,0.5,0}

\lstdefinelanguage[]{iPython}[]{python}{
    commentstyle=\color{cyan}\ttfamily,
    stringstyle=\color{red},
    keywordstyle=\color{deepblue}\ttb,
    keepspaces=true,
    showspaces=false,
    showstringspaces=false,
    morekeywords=[4]{assert, Gen},
    morekeywords=[3]{Implies,Or,And,Not},
    keywordstyle=[3]\color{blue}\bf\ttfamily,
    keywordstyle=[4]\color{deepred},
    frame=l,
    numbers=left,
    numberstyle=\normalsize\color{halfgray},
    xleftmargin={0.2cm},
    basicstyle=\fontfamily{cmtt}\normalsize,
    keywordstyle=\color{deepgreen},
}
\lstdefinelanguage{Scheme}{
  morekeywords=[1]{XOR,M,T,Good,Implies,And,Or,Biconditional,Not,Ref,Mref,Noise},
  morekeywords=[2]{begin},
  morekeywords=[3]{import, export},
  alsodigit=!\$\%&*+-./:<=>?@^_~,
  sensitive=true,
  escapeinside=`',
  morecomment=[l]{;},
  morecomment=[l]{\#},
  morecomment=[s]{\#|}{|\#},
  morestring=[b]",
  basicstyle=\ttfamily,
  keywordstyle=\bf\ttfamily\color[rgb]{0,.3,.7},
  commentstyle={\color[rgb]{0.24, 0.51, 0.51}},
  stringstyle={\color[rgb]{0.75, 0.49, 0.07}},
  upquote=true,
  breaklines=false, %<-- problem for color
  breakatwhitespace=true,
  literate=*{`}{{`}}{1},
  showstringspaces=false,
}
\lstdefinestyle{churchstyle}{
  commentstyle=\color{gray},
  keywordstyle=\color{githubblue},
  numberstyle=\color{black}, % took off tiny 
  stringstyle=\color{red},
  basicstyle=\ttfamily\color{githubblue},
  breakatwhitespace=false,         
  breaklines=false,  %<--- problem for color              
  captionpos=b,                    
  keepspaces=true,                 
  numbers=none,                    
  numbersep=5pt,                  
  showspaces=false,                
  showstringspaces=false,
  showtabs=false,                  
  tabsize=2,
  literate=*{\{}{{\textcolor{NavyBlue}{\{}}}{1}
        {\}}{{\textcolor{black}{\}}}}{1}
        {[}{{\textcolor{black}{[}}}{1}
        {]}{{\textcolor{black}{]}}}{1}
        {(}{{\textcolor{black}{(}}}{1}
        {)}{{\textcolor{black}{)}}}{1}%
}
\lstnewenvironment{tabularlstlisting}[1][]
 {%
  \lstset{aboveskip=-1.3ex,belowskip=-2.5ex,#1}%
 }
 {}

\lstdefinelanguage[]{problang}[]{python}{
    commentstyle=\color{cyan}\ttfamily,
    stringstyle=\color{red},
    keywordstyle=\color{deepblue}\ttb,
    keepspaces=true,
    showspaces=false,
    showstringspaces=false,
    morekeywords=[4]{assert, Gen},
    morekeywords=[3]{Implies,Or,And,Not,ValueError, Decision, Flip, observe, Factor, Categorical, prod},
    keywordstyle=[3]\color{blue}\bf\ttfamily,
    keywordstyle=[4]\color{deepred},
    frame=l,
    numbers=left,
    numberstyle=\scriptsize\color{halfgray},
    xleftmargin={0.2cm},
    basicstyle=\fontfamily{cmtt}\scriptsize,
    keywordstyle=\color{deepgreen},
}
\usetikzlibrary{positioning, shapes, arrows.meta}
\usetikzlibrary{arrows.meta, decorations.pathreplacing}
\usetikzlibrary{calc}

\newcommand{\wmcplotrow}[2]{%
  \raisebox{0.5cm}{\rotatebox{90}{\tiny #1}}%
  \hspace{2pt}%
  \includegraphics[width=0.18\textwidth]{paper_plots/#1/#2/transitivity/wmc_score_distribution.pdf}%
  \includegraphics[width=0.18\textwidth]{paper_plots/#1/#2/forward_implication/wmc_score_distribution.pdf}%
  \includegraphics[width=0.18\textwidth]{paper_plots/#1/#2/negation_consistency/wmc_score_distribution.pdf}%
  \includegraphics[width=0.18\textwidth]{paper_plots/#1/#2/mutual_exclusivity/wmc_score_distribution.pdf}%
  \includegraphics[width=0.18\textwidth]{paper_plots/#1/#2/spatial_exclusivity/wmc_score_distribution.pdf}\\[2pt]%
}
\newcommand{\wmcplotrowlabeled}[2]{%
  \raisebox{0cm}{\rotatebox{90}{\tiny #1}}%
  \hspace{2pt}%
  \setlength{\tabcolsep}{0pt}%
  \begin{tabular}{@{}ccccc@{}}
  \includegraphics[width=0.19\textwidth]{paper_plots/#1/#2/transitivity/wmc_score_distribution.pdf}%
  &\includegraphics[width=0.18\textwidth]{paper_plots/#1/#2/forward_implication/wmc_score_distribution.pdf}%
  &\includegraphics[width=0.18\textwidth]{paper_plots/#1/#2/negation_consistency/wmc_score_distribution.pdf}%
  &\includegraphics[width=0.18\textwidth]{paper_plots/#1/#2/mutual_exclusivity/wmc_score_distribution.pdf}%
  &\includegraphics[width=0.18\textwidth]{paper_plots/#1/#2/spatial_exclusivity/wmc_score_distribution.pdf}\\
   \footnotesize Transitivity & \footnotesize Forward Implication & \footnotesize Negation Consistency & \footnotesize Mutual Exclusivity & \footnotesize Spatial Exclusivity
  \end{tabular}\\[2pt]%
}

\usepackage{booktabs}

\title{From Token Probabilities to Semantic Constraints: Towards Declarative Probabilistic Evaluation of Language Models}

\author{Kyle Richardson\textsuperscript{1}
Cullen Anderson\textsuperscript{2}
Pranav Balakrishnan\textsuperscript{2}
Takuto Ban\textsuperscript{2}
\textbf{Daksha Ladia}\textsuperscript{2} \\ 
\textbf{Ankita Gupta}\textsuperscript{2} \quad 
\textbf{Marisa Hudspeth}\textsuperscript{2} \\  
\textsuperscript{1}Allen Institute for AI \quad
\textsuperscript{2}University of Massachusetts Amherst \\
\texttt{kyler@allenai.org}, \,
\texttt{\{cyanderson,pranavbalakr,tban,dladia\}@umass.edu}  
}

\begin{document}
\maketitle

\begin{abstract}

\update{While Large Language Models have improved rapidly, many fundamental questions remain about how to evaluate the knowledge and reasoning abilities they acquire, and how such evaluations relate to the learning signals used in pre-training. In this paper, we propose \modellog, a declarative probabilistic framework for pre-training evaluation that makes the semantic structure of model behavior explicit and provides new formal tools for relating evaluation to learning. \modellog specifies evaluation targets as symbolic constraints over token-level predictions and measures how strongly a model’s distribution satisfies those constraints. We explore the framework through a new suite of tasks targeting negation, mutual exclusivity, and consistency, finding systematic failures that are difficult to characterize through token likelihood or answer accuracy alone. We further show that these evaluation scores can also be interpreted as losses, whose gradients reflect logical strength, informativeness, and variable-level sensitivity. This links evaluation and learning through a shared semantics, suggesting evaluation methods that diagnose model behavior while also helping to clarify the semantic structure of learning.}

\end{abstract}

\section{Introduction}

Modern pre-training relies on a remarkably simple learning signal: cross-entropy loss for next-token prediction. This objective has given rise to models with broad linguistic, factual, and reasoning capabilities, but at first glance it seems limited. While it provides positive, local supervision over observed continuations, many other richer semantic relations remain implicit. For example in Figure~\ref{fig:constraints}, next-token supervision might tell us that \emph{alcohol} is a sensible completion to the generic statement \textcolor{gray}{\emph{When driving, it is \underline{not} safe to drink \_\_}}, which expresses a broadly applicable factual or normative commitment. However, the objective does not directly state that \emph{alcohol} should be unlikely in the contradictory context \textcolor{gray}{\emph{When driving, it is safe to drink \_\_}}. Nor does it explicitly encode relations among multiple token predictions: paraphrases should preserve the same commitments, while contexts that commit to incompatible states of affairs should induce divergent predictions. This raises the natural question: \emph{why is cross-entropy such an effective signal for learning?} Relatedly, \emph{can richer learning signals be developed that improve the robustness of cross-entropy pre-training?}

\begin{figure}[t]
\centering
\begin{tikzpicture}[scale=0.72, transform shape,
    tokenbadge/.style={font=\tiny\bfseries, text=white, 
        rounded corners=1pt, inner sep=1.5pt, minimum size=0.3cm}]

% ======== Left: LM Predictions ========
\node[font=\footnotesize\bfseries, anchor=north, align=left] (Alabel) at (2,0) {(A) \textcolor{violet}{LM Token Predictions} \quad\quad\quad};
\node[below=-0.2cm of Alabel.south, anchor=north, font=\scriptsize, text=gray, text width=5.4cm, align=left] (Asub) {
    Example local next-token \colorbox{red!50}{\textcolor{white}{decisions}} for a language model $t \sim \mathbb{P}_{\mathcal{M}_{\theta}}(\cdot)$.
};

\node[below=-0.05cm of Asub.south west, anchor=north west,
      draw=gray!40, fill=white, rounded corners=2pt,
      inner sep=3pt, text width=4.8cm, font=\footnotesize] (sa) {
    When driving, it is not safe to drink \tikz[baseline]{\node[fill=red!15, draw=red!50, rounded corners=1.5pt, inner sep=2pt, anchor=base, thick](toka){\strut\textbf{alcohol}}; \node[tokenbadge, fill=red!50, anchor=west] at (toka.east) {a};}
};

\node[below=0.06cm of sa.south west, anchor=north west,
      draw=gray!40, fill=white, rounded corners=2pt,
      inner sep=3pt, text width=4.8cm, font=\footnotesize] (sb) {
    It is not true that while driving it is safe to drink \tikz[baseline]{\node[fill=red!15, draw=red!50, rounded corners=1.5pt, inner sep=2pt, anchor=base, thick](tokb){\strut\textbf{alcohol}}; \node[tokenbadge, fill=red!50, anchor=west] at (tokb.east) {b};}
};

\node[below=0.15cm of sb.south west, anchor=north west,
      draw=gray!40, fill=white, rounded corners=2pt,
      inner sep=3pt, text width=4.8cm, font=\footnotesize] (sc) {
    When driving, it is safe to drink \tikz[baseline]{\node[fill=blue!12, draw=blue!50, rounded corners=1.5pt, inner sep=2pt, anchor=base, thick](tokc){\strut\textbf{alcohol}}; \node[tokenbadge, fill=blue!50, anchor=west] at (tokc.east) {c};} \\
};

\node[below=0.06cm of sc.south west, anchor=north west,
      draw=gray!40, fill=white, rounded corners=2pt,
      inner sep=3pt, text width=4.8cm, font=\footnotesize] (sd) {
    It is true that while driving it is safe to drink \tikz[baseline]{\node[fill=blue!12, draw=blue!50, rounded corners=1.5pt, inner sep=2pt, anchor=base, thick](tokd){\strut\textbf{alcohol}}; \node[tokenbadge, fill=blue!50, anchor=west] at (tokd.east) {d};}
};

% ======== Query box under predictions ========
\node[below=0.2cm of sd.south west, anchor=north west,
      draw=violet!50, fill=violet!3, rounded corners=3pt,
      inner sep=4pt, text width=4.8cm, font=\footnotesize] (queries) {
    {\footnotesize \bfseries\textcolor{black}{(C) \textcolor{violet}{Queries over formulas $\mathcal{T}$}}}\\[1pt]
    \textcolor{gray}{\scriptsize Evaluation as structured probabilistic queries over constraints $\mathcal{T}_{i}$ weighted by $\mathbb{P}_{\mathcal{M}_{\theta}}$} \\[1pt]
    \textbf{Soft satisfaction:}
    \hspace{2pt}$\footnotesize \mathbb{P}(\mathcal{T}_{i}; \mathcal{M}_{\theta})$ $\footnotesize $ %\\[2pt]
    \textbf{Marginals:} %\\[1pt]
    \hspace{2pt}$\footnotesize \mathbb{P}($\colorbox{red!50}{\footnotesize\bfseries\color{white}$v$}$\, \footnotesize \mid \mathcal{T}_{i}; \mathcal{M}_{\theta})$ \\[2pt]
    \textbf{Gradients :} %\\[1pt]
    \hspace{2pt}$\footnotesize\nabla_{\theta}\, \mathbb{P}(\mathcal{T}_{i}; \mathcal{M}_{\theta})$ 
};

% ======== Right: Declarative Theory ========
\node[font=\footnotesize\bfseries, anchor=north, align=left] (Blabel) at (7.8,0) {(B) \textcolor{violet}{Declarative Constraints $\mathcal{T}$}};
\node[below=-0.1cm of Blabel.south, anchor=north, font=\scriptsize, text=gray, text width=4cm, align=left] (Bsub) {
    Boolean formulas specifying ideal token decisions for any model $\mathcal{M}(\cdot)$.
};

\node[below=-0.0cm of Bsub.south west, anchor=north west,
      draw=gray!50, fill=gray!2, rounded corners=3pt,
      inner sep=4pt, text width=4.2cm, font=\scriptsize, fill=gray!5] (theory) {
    {\footnotesize\bfseries Likely tokens $\mathcal{T}_{p}$}\\[1pt]
    \textcolor{gray}{Models $\mathcal{M}$ should prefer local factual token decisions.} \\[1pt]
    $\footnotesize \mathcal{M}($\colorbox{red!50}{\tiny\bfseries\color{white}a}$\scriptstyle)$
    \, \logicand \, 
    $\footnotesize \mathcal{M}($\colorbox{red!50}{\tiny\bfseries\color{white}b}$\scriptstyle)$
    \\[3pt]
    \tikz{\draw[gray!30] (0,0) -- (3.7cm,0);}\\[2pt]
    {\footnotesize\bfseries Unlikely tokens $\mathcal{T}_{n}$}\\[1pt]
    \textcolor{gray}{Models $\mathcal{M}$ should disprefer non-factual local tokens decisions.} \\[1pt]
    $\footnotesize\neg \mathcal{M}($\colorbox{blue!50}{\scriptsize\bfseries\color{white}c}$\scriptstyle)$
    \, \logicand \, 
    $\footnotesize\neg \mathcal{M}($\colorbox{blue!50}{\scriptsize\bfseries\color{white}d}$\scriptstyle)$
    \\[3pt]
    \tikz{\draw[gray!30] (0,0) -- (3.7cm,0);}\\[2pt]
    {\footnotesize\bfseries Paraphrase Symmetries $\mathcal{T}_{s}$}\\[1pt]
    \textcolor{gray}{Token decisions involving paraphrases should be compatible}\\[1pt]
    $\mathcal{M}($\colorbox{red!50}{\scriptsize\bfseries\color{white}a}$)  \biconditional \mathcal{M}($\colorbox{red!50}{\scriptsize\bfseries\color{white}b}$) \, \logicand \, $ \\[1pt]
    $\quad \mathcal{M}($\colorbox{blue!50}{\scriptsize\bfseries\color{white}c}$) \biconditional \mathcal{M}($\colorbox{blue!50}{\scriptsize\bfseries\color{white}d}$)$
    % $\footnotesize \mathcal{M}($\colorbox{red!50}{\scriptsize\bfseries\color{white}a}$\scriptstyle) 
    % \footnotesize \biconditional
    % \footnotesize\mathcal{M}($\colorbox{red!50}{\scriptsize\bfseries\color{white}b}$\scriptstyle)$\\[1pt]
    % $\footnotesize \mathcal{M}($\colorbox{blue!50}{\scriptsize\bfseries\color{white}c}$\scriptstyle) 
    % \biconditional
    % \mathcal{M}($\colorbox{blue!50}{\scriptsize\bfseries\color{white}d}$\scriptstyle)$
    \\[3pt]
    \tikz{\draw[gray!30] (0,0) -- (3.7cm,0);}\\[2pt]
    {\footnotesize\bfseries Incompatibilities $\mathcal{T}_{m}$}\\[1pt]
    \textcolor{gray}{Contradictory token predictions should be distinct and mutually exclusive.} \\[1pt]
    $(\mathcal{M}(\colorbox{red!50}{\scriptsize\bfseries\color{white}a}) \, \logicand \, \neg\mathcal{M}(\colorbox{blue!50}{\scriptsize\bfseries\color{white}b})) \, \logicor \,$ \\[1pt]
    $\quad (\neg\mathcal{M}(\colorbox{red!50}{\scriptsize\bfseries\color{white}a}) \, \logicand \, \mathcal{M}(\colorbox{blue!50}{\scriptsize\bfseries\color{white}b}))$ 
    % $\mathcal{M}($$)
    % \logicand
    % \neg \mathcal{M}($\colorbox{blue!50}{\scriptsize\bfseries\color{white}c}$\scriptstyle)$\\[1pt]
    % $\footnotesize \, \logicor\,
    % \neg \mathcal{M}($\colorbox{red!50}{\scriptsize\bfseries\color{white}a}$\scriptstyle) 
    % \, \logicand \,  
    % \footnotesize\mathcal{M}($\colorbox{blue!50}{\scriptsize\bfseries\color{white}c}$\scriptstyle)$
};

% ======== Arrows: predictions → theory ========
\draw[-{Stealth[length=3pt]}, red!40, thick, rounded corners=3pt, shorten >=2pt, shorten <=2pt]
    ([xshift=-0.1cm]sa.east) -- ([yshift=-0.53cm]theory.north west);
\draw[-{Stealth[length=3pt]}, red!40, thick, rounded corners=3pt, shorten >=2pt, shorten <=2pt]
    ([xshift=-0.1cm]sb.east) -- ([xshift=0.6cm]sb.east) |- ([yshift=-0.7cm]theory.north west);

\draw[-{Stealth[length=3pt]}, blue!40, thick, rounded corners=3pt, shorten >=2pt, shorten <=2pt]
    ([xshift=-0.1cm]sc.east) -- ([xshift=0.4cm]sc.east) |- ([yshift=-0.9cm]theory.north west);
\draw[-{Stealth[length=3pt]}, blue!40, thick, rounded corners=3pt, shorten >=2pt, shorten <=2pt]
    ([xshift=-0.1cm]sd.east) -- ([xshift=0.6cm]sd.east) |- ([yshift=-1.1cm]theory.north west);

% Arrow: queries → theory (points TO the theory)
\draw[-{Stealth[length=3pt]}, violet!50, thick, shorten >=2pt, shorten <=2pt]
    ([xshift=-0.10cm]queries.east) -- ([yshift=-2.25cm]theory.west)
    node[midway, above, sloped, text=violet!70] {\tiny $\mathbb{P}(\mathcal{M}(\cdot)) \sim$}
    node[midway, below, sloped, font=\tiny\itshape, text=violet!70] {$\mathbb{P}_{\mathcal{M}_{\theta}}(\cdot)$};
\end{tikzpicture}

\caption{An illustration of our evaluation framework \modellog that maps local next-token predictions from a pre-trained language model (\textbf{A}) into propositional variables $\mathcal{M}(\cdot)$ denoting prediction events. Symbolic constraints over these variables (\textbf{B}) specify the local semantic relations that should hold among them. Evaluation then becomes probabilistic inference over constraints (\textbf{C}): model token probabilities $\mathbb{P}_{\mathcal{M}_{\theta}}(\cdot)$ provide weights over assignments $\mathbb{P}(\mathcal{M}(\cdot))$, \textbf{satisfaction} or \textbf{marginal} probability measures how strongly the model supports the desired semantic structure, and the \textbf{gradients} of such inferences provide a direct link to learning. 
}
\label{fig:constraints}
\end{figure}

In this paper, we approach these broad questions \update{\emph{solely}} through the lens of pre-training evaluation, focusing in particular on how to \update{formalize} and precisely measure the degree to which \update{pre-trained} models satisfy semantic constraints. As illustrated in Figure~\ref{fig:constraints}, we develop a framework called \modellog where we express the target of an evaluation as an explicit \textbf{declarative theory}: a set of logical formulas specifying the relations that should ideally hold among a model’s local token-level predictions. For example, to express that \emph{alcohol} is compatible with both the prefix \textcolor{gray}{\emph{When driving, it is not safe to drink \_\_}} and its paraphrased form \textcolor{gray}{\emph{It is not true that when driving it is safe to drink \_\_}}, we use \update{logical variables} such as $\mathcal{M}(\colorbox{red!50}{\textcolor{white}{a}})$ and $\mathcal{M}(\colorbox{red!50}{\textcolor{white}{b}})$ to denote the corresponding prediction events. The paraphrase relation between these contexts can then be encoded by the constraint $\mathcal{M}(\colorbox{red!50}{\textcolor{white}{a}}) \biconditional \mathcal{M}(\colorbox{red!50}{\textcolor{white}{b}})$, requiring that the two prediction events agree. In this way, declarative theories let us move from isolated token likelihoods to explicit statements about the semantic relations that should hold among them.

Questions about model behavior can then be formulated as structured queries over the theories themselves. We focus on \textbf{probabilistic queries}: given a model’s token probabilities, we ask how much probability mass the model assigns to interpretations that \update{satisfy the constraints of a given theory}. \update{This builds on work in statistical relational learning \citep{getoor2007introduction, de2016statistical} and neuro-symbolic modeling \citep{MARRA2024104062, feldstein2024mapping}}, where logical \update{constraints} are combined with probabilistic weights to reason about structured events. In this way, probabilistic inference moves evaluation beyond binary correctness, measuring how strongly a model’s distribution supports \update{a target} semantic structure.

\update{The main goal of this paper is \update{to give a technical outline of} \modellog and \update{show how such a framework can both} clarify the semantics of evaluation and connect such semantics directly to learning. As such, the paper is primarily methodological in nature and offers a mix of formal and empirical results. On the formal side, we identify two interpretability challenges that arise when doing semantic reasoning over token probabilities. First, raw next-token probabilities measure how probability mass is divided among competing continuations, making them difficult to interpret directly as probabilities of semantic validity. We therefore rescale them using information about the local prediction distribution, treating semantic event probability as membership in a model’s effective ``live’’ region of plausible continuations \citep{holtzman2019curious, hewitt2022truncation}.  Second, raw constraint probabilities can overstate model competence, since some formulas are easy to satisfy by chance. We therefore introduce a notion of \textbf{constraint informativeness}, which accounts for chance satisfaction and is incorporated into our main probabilistic evaluation metric.}

\update{Since semantic satisfaction is differentiable in model probabilities, our evaluation scores can also be interpreted as losses. Surprisingly, we show how constraint informativeness, introduced as a measurement for evaluation, appears directly in the corresponding gradients for learning. This offers a formal perspective on why likelihood-style next-token objectives -- which we show in \modellog can be represented as maximally informative conjunctive formulas -- can potentially induce strong learning signals and provides a basis for systematically comparing them to other candidate objectives.}

\update{On the empirical side, we introduce \probCT (\textbf{Prob}abilistic \textbf{C}onsistency \textbf{T}ests) \update{a set of diagnostic tests inspired by prior work on consistency probing} \citep{elazar2021measuring, kassner2020negated}. We find that current pre-trained models perform poorly across these tasks, suggesting that pre-training does not reliably induce the structured semantic relations captured by our constraints. More importantly, \probCT illustrates how declarative theories and probabilistic queries can expose failures that are difficult to see from token likelihoods or accuracy alone. This includes an intriguing inverse-scaling \citep{mckenzie2023inverse} pattern we observe, where larger models systematically satisfy certain constraints less often than smaller ones.}

\paragraph{Contributions} \update{In line with the special track on \emph{New Missions for NLP Research}, we propose \modellog, a new evaluation framework that connects language model evaluation with techniques from statistical relational learning and probabilistic logic. Our main contribution is methodological: we show how declarative probabilistic evaluation can help to bring more semantic clarity to evaluation by treating evaluation data as compositional semantic objects. We illustrate this methodology through a mixture of formal results and empirical case studies on a new diagnostic benchmark called \probCT.}

\section{Related Work}

\paragraph{Language model evaluation} Our work connects to loss-based language model evaluation, which measures models using quantities such as perplexity \citep{jelinek1980interpolated, goodman2001bit, bengio2003neural, brown2020language, lozhkovsmollm2, groeneveld2024olmo, magnusson2024paloma}, and post-hoc behavioral evaluation, which tests models on diagnostic tasks and benchmarks \citep[\emph{inter alia}]{srivastava2023beyond, wang2018glue, petroni2019language, richardson2020probing, ribeiro2020beyond, liang2022holistic}. \update{We specifically take inspiration from behavioral studies that use consistency as a core diagnostic of model behavior \citep{elazar2021measuring, kassner2020negated, liu2024aligning, jang2022becel, novikova2025consistency}}. \modellog sits between these traditions: it uses model probabilities while targeting interpretable semantic capabilities. Unlike either, it evaluates rich declarative constraints over multiple predictions, yielding probabilistic, differentiable measurements that directly connect the semantics of evaluation with learning.

\begin{figure*}[t]
\centering
\begin{tikzpicture}[scale=0.82, transform shape]

% ======== Left: Token predictions ========
\node[font=\normalsize\bfseries, anchor=north west] (clabel) at (0,0) {(A) Token predictions};

\node[below=0.1cm of clabel.south west, anchor=north west,
      draw=gray!40, fill=gray!3, rounded corners=2pt,
      inner sep=5pt, text width=6.3cm, font=\footnotesize, tokenbadge/.style={font=\tiny\bfseries, text=white, 
        rounded corners=1pt, inner sep=1.5pt, minimum size=0.3cm}] (exbox) {
    When driving, it is not safe to drink \tikz[baseline]{\node[fill=red!15, draw=red!50, rounded corners=1.5pt, inner sep=2pt, anchor=base, thick](toka){\strut\textbf{alcohol}}; \node[tokenbadge, fill=red!50, anchor=west] at (toka.east) {a};} \\ 
     When driving, it is safe to drink \tikz[baseline]{\node[fill=blue!12, draw=blue!50, rounded corners=1.5pt, inner sep=2pt, anchor=base, thick](tokc){\strut\textbf{alcohol}}; \node[tokenbadge, fill=blue!50, anchor=west] at (tokc.east) {c};}
};

% ======== Left: Constraint ========
\node[below=0.1cm of exbox.south west, font=\normalsize\bfseries, anchor=north west] (constraint) {(B) Constraint $\formula$};

\node[below=0.1cm of constraint.south west, anchor=north west,
      draw=gray!40, fill=gray!3, rounded corners=2pt,
      inner sep=5pt, text width=6cm, font=\scriptsize] (cbox) {
    \textcolor{gray}{`Alcohol' is a valid token completion in one sentence, but not in both sentences.} \\[.2cm]
    $\formula = \big(\mathcal{M}($\colorbox{red!50}{\tiny\bfseries\color{white}a}$) \,\logicand\, \neg\mathcal{M}($\colorbox{blue!50}{\tiny\bfseries\color{white}c}$)\big)$
    $\;\logicor\; \big(\neg\mathcal{M}($\colorbox{red!50}{\tiny\bfseries\color{white}a}$) \,\logicand\, \mathcal{M}($\colorbox{blue!50}{\tiny\bfseries\color{white}c}$)\big)$
};

% ======== Right: Truth Table ========
\node[font=\normalsize\bfseries, anchor=north west] (tlabel) at (7.5,0.3) {(C) Boolean semantics};
\node[font=\normalsize\bfseries, anchor=north west] (llabel) at (7.5,-3.1 ) {(D) Language model};
\node[font=\normalsize\bfseries, anchor=north west] (plabel) at (14.5,-3.1 ) {(E) Uninformed Prior};
% ======== LM Box ========
\node[below=.1cm of llabel, anchor=north,
      draw=blue!70, fill=blue!60, rounded corners=4pt,
      inner sep=6pt, font=\large\bfseries, text=white,
      minimum width=1.8cm, minimum height=1cm] (lmbox) {$\mathcal{M}_{\theta}$};

\node[right=6.2cm of lmbox, anchor=east,
      draw=blue!70, fill=blue!60, rounded corners=4pt,
      inner sep=6pt, font=\large\bfseries, text=white,
      minimum width=1.8cm, minimum height=1cm] (baseline) {$\mathcal{M}_{0}$};

\node[below=0.0cm of tlabel.south west, anchor=north west] (table) {
\small
\begin{tabular}{
    >{\centering\arraybackslash}p{0.2cm}  
    >{\centering\arraybackslash}p{0.7cm} 
    >{\centering\arraybackslash}p{0.7cm} 
    >{\centering\arraybackslash}p{1.1cm}
    p{4.8cm}
    >{\centering\arraybackslash}p{2.4cm}
}
\toprule
& $\mathcal{M}(\colorbox{red!50}{\scriptsize\bfseries\color{white}a})$ 
& $\mathcal{M}(\colorbox{blue!50}{\scriptsize\bfseries\color{white}c})$ 
& $I \in \mathsf{I}(\formula)$ & \multicolumn{1}{c}{$S(I_{j}; \theta)$} & uniform \\
\midrule
$I_1$ & T & T & & $\mathbb{P}_{\theta}(\mathcal{M}(\colorbox{red!50}{\scriptsize\bfseries\color{white}a})) \cdot \mathbb{P}_{\theta}(\mathcal{M}(\colorbox{blue!50}{\scriptsize\bfseries\color{white}c}))$ & 0.25 \\[3pt]
\rowcolor{green!8}
$I_2$ & T & F & \checkmark & $\mathbb{P}_{\theta}(\mathcal{M}(\colorbox{red!50}{\scriptsize\bfseries\color{white}a})) \cdot (1  - \mathbb{P}_{\theta}(\mathcal{M}(\colorbox{blue!50}{\scriptsize\bfseries\color{white}c})))$ & 0.25 \\[3pt]
\rowcolor{green!8}
$I_3$ & F & T & \checkmark & $(1 - \mathbb{P}_{\theta}(\mathcal{M}(\colorbox{red!50}{\scriptsize\bfseries\color{white}a}))) \cdot \mathbb{P}_{\theta}(\mathcal{M}(\colorbox{blue!50}{\scriptsize\bfseries\color{white}c}))$ & 0.25 \\[3pt]
$I_4$ & F & F & & $(1 - \mathbb{P}_{\theta}(\mathcal{M}(\colorbox{red!50}{\scriptsize\bfseries\color{white}a}))) \cdot (1  - \mathbb{P}_{\theta}(\mathcal{M}(\colorbox{blue!50}{\scriptsize\bfseries\color{white}c})))$ & 0.25 \\ \bottomrule
\\[.3cm]
& & \multicolumn{3}{l}{$\qquad\,\, \text{WMC}(\formula; \theta) = S(I_{2}) + S(I_{3})$} & $\text{WMC}_{0}(\formula) = 0.5$
\end{tabular}
};

% constraints to semantics 
\draw[->, gray!70, very thick, rounded corners=4pt, shorten >=3pt, shorten <=3pt]
    ([yshift=-0.05cm]cbox.north) -- ++(0,0.0) |- ([xshift=0.2cm, yshift=-0.2cm]table.west);
% ======== Arrows: predictions + constraint → LM ========
\draw[-, black!70, very thick, rounded corners=4pt]
    ([yshift=0cm]exbox.east) -- ++(0.2,-0.0) |- ([xshift=-1.5cm, yshift=-0.75cm]lmbox.north west);
\draw[<-, black!70, very thick, rounded corners=4pt, shorten >=3pt, shorten <=3pt]
    ([xshift=0.12cm,yshift=-.25cm]lmbox.west) -- ++(-1.8,-0.0) -| ([yshift=-1cm]cbox);
%% prob 
\draw[->, black!70, very thick, rounded corners=4pt]
    ([xshift=0.0cm, yshift=-0.22cm]lmbox.east) -- ++(-0.05,-0.0) -| ([xshift=-2.8cm, yshift=-1cm]table)
    node[below right=1.4cm and -0.8cm] {\scriptsize $\hat{\mathbb{P}}_{\theta}(\mathcal{M}(x,\colorbox{red!50}{\textcolor{white}{$w_{j}$}})) \sim \mathbb{P}_{\mathcal{M}_{\theta}}(\colorbox{red!50}{\textcolor{white}{$w_{j}$}} \mid x_{<\, j})\qquad\qquad\,\, $}
    ;
\draw[->, black!70, very thick, rounded corners=4pt]
    ([xshift=0.15cm,yshift=-0.22cm]baseline.east) -- ++(0.01,0.0) -| ([xshift=6cm, yshift=0cm]table)
    node[below=2.5cm] {\scriptsize $\mathbb{P}_{0}(\mathcal{M}(x,\colorbox{red!50}{\textcolor{white}{$w_{j}$}})) = 0.5\hspace{2.8cm}$}
    ;
\end{tikzpicture}

\caption{Given \textbf{token predictions} (\textbf{A}) and a \textbf{constraint} $\formula$ over those predictions (\textbf{B}), evaluation reduces to weighted model counting (\textbf{WMC}) over interpretations $I_i$ of $F$ (\textbf{C}), weighted by model-induced probabilities from $\mathcal{M}_{\theta}$ (\textbf{D}). To make satisfaction scores interpretable, we also compare against an \textbf{uninformed prior} (\textbf{E}), which performs weighted model counting under uniform weights, yielding $\textbf{WMC}_0(\formula)$.}
\label{fig:wmc}
\end{figure*}

\paragraph{Neuro-symbolic modeling} We employ techniques from statistical relational learning that combine logical constraints with probabilistic weights \citep{de2015probabilistic, manhaeve2018deepproblog, li2019logic, li2023scallop} and exact inference techniques \citep{chavira2008probabilistic, fierens2015inference}. Similar techniques are also used in constraint-based neuro-symbolic learning methods, notably semantic loss \citep{xu2018semantic, ahmed2023pseudo, ahmed2024semantic, richardson2024understanding, calanzone2025logically} and semantic probabilistic layers \citep{ahmed2022semantic}, \update{as well as other approaches that couple structured probabilistic inference with LLMs \citep{dohan2022language, kassner2023language, lew2023sequential, cheng2026analytica, garg2026probabilistic, richardsoncots}}. \update{Our focus differs: rather than using constraints primarily as training objectives or output-layer structure for inference, we use them for LM evaluation and formal analysis.} 

\section{The \modellog Framework}

In this section, we define the core concepts and notation underlying the evaluation framework \modellog illustrated in Figure~\ref{fig:wmc}, starting with the five principles outlined below. In \S~\ref{sec:main_issues}, we then turn to the core technical obstacles that arise when applying our approach to pre-trained language model evaluation: \update{\emph{token probability calibration}, \emph{constraint probability rescaling} and \emph{selection}}.

\paragraph{1. Token prediction events are symbolic objects.}
Let $x = w_{1}, \ldots, w_{n}$ be a token sequence and \update{$\mathbb{P}_{\mathcal{M}_{\theta}}(\colorbox{red!50}{\textcolor{white}{$w_{j}$}} \mid x_{<\, j})$ be the next token probability assigned by an autoregressive model} $\mathcal{M}_{\theta}$. In \modellog, local model predictions are treated as symbolic propositions. We write $\mathcal{M}(x, \colorbox{red!50}{\textcolor{white}{$w_{j}$}})$ to denote the proposition that for model $\mathcal{M}$, the token $w_{j}$ is a valid completion at position $j$ in the context $x_{<j}$. Intuitively, this proposition abstracts away from the raw token probability and asks if the model supports $w_j$ as a plausible continuation. 

\paragraph{2. Relations between predictions are formulas.}
Semantic evaluation requires reasoning not only about individual predictions, but about relations among predictions. In \modellog, these relations are expressed as Boolean formulas $\formula$ over prediction variables $\mathcal{M}(\cdot)$ (denoted below in short form as $\propvar$), using standard logical operators such as $\logicand$, $\logicor$, $\implication$, $\biconditional$ and Booleans $\top/\bot$ (true/false).  Thus, a formula $\formula$ specifies the semantic structure that should hold among a set of local token predictions.

\paragraph{3. Formulas are weighted by model probabilities.}
To evaluate a formula probabilistically, we assign each prediction variable a weight derived from the language model. For a prediction event $\mathcal{M}(x,\colorbox{red!50}{\textcolor{white}{$w_{j}$}})$, this weight is based on the language model probability $\mathbb{P}_{\mathcal{M}_{\theta}}(\colorbox{red!50}{\textcolor{white}{$w_{j}$}} \mid x_{<j})$. Since \modellog represents prediction events as logical atoms, their semantics are naturally Bernoulli: in any interpretation, the event either holds or does not hold (we examine this closely in \S\ref{sec:main_issues}). We write the resulting event probability as $\mathbb{P}_{\theta}(\mathcal{M}(x,\colorbox{red!50}{\textcolor{white}{$w_{j}$}}))$. In this way, a language model induces a probability distribution over truth assignments. % to the symbolic prediction variables.

\paragraph{4. Evaluation is probabilistic satisfaction.}
Given a formula $\formula$ and probabilities for its variables, \modellog evaluates the degree to which the model satisfies the formula by computing $\mathbb{P}_{\theta}(\formula)$, the probability that $\formula$ holds under the model-induced distribution over truth assignments. Equivalently, this is the total probability mass assigned to interpretations $I$ that satisfy the declarative theory $\formula$. We compute this quantity using \textbf{weighted model counting} \citep{chavira2008probabilistic}:
\begin{align}
\mathbb{P}_{\theta}(\formula)
&= \text{WMC}(\formula; \theta) \\ 
&:= \sum_{I \in \mathsf{I}(\formula)}
\underbrace{\prod_{\propvar : I(\propvar)=\text{T}} \hspace{-.3cm} \mathbb{P}_{\theta}(\propvar) \cdot
\prod_{\propvar : I(\propvar)=\text{F}} \hspace{-.3cm} \left(1 - \mathbb{P}_{\theta}(\propvar)\right)}_{\text{probability } S(I;\theta) \text{ of interpretation } I} \nonumber
\end{align}
$\mathbb{P}_{\theta}(\formula)$ is then our main evaluation score: a graded measure of how strongly the model’s probabilities support the desired semantic constraint.

\paragraph{5. Learning is tied to satisfaction gradients.}
Since satisfaction probabilities are differentiable functions of the model-induced variable probabilities, \modellog exposes a natural connection \update{to}  learning. For a formula $\formula$, we can define the corresponding \textbf{semantic loss} from \citet{xu2018semantic}: 
\begin{align}
\ell_{\mathrm{sl}}(\formula; \theta) := - \log \mathbb{P}_{\theta}(\formula).
\label{eq:sl}
\end{align}
This loss penalizes the model when it assigns low probability mass to interpretations satisfying the declarative theory. Thus, the same quantity used for evaluation can also be differentiated to ask how changes in local prediction probabilities would affect semantic satisfaction. Later, we use this connection to show that semantic properties of the constraints that are relevant for evaluation appear in the gradients of the semantic loss $\nabla \ell_{\text{sl}}(\formula; \theta)$.

\begin{example}[\textcolor{black}{\textcolor{blue}{Likelihood formula}}]
%\paragraph{Likelihood and cross-entropy as a special case.}
An important special case is standard language-model likelihood. Given a token sequence $x = w_{1}, \ldots, w_{n}$, define the \textbf{likelihood formula} below as the conjunction of the observed token-prediction events $\formula_{\ell}(x)$.
\begin{lstlisting}[language=problang, escapeinside={`}{'},
    basicstyle=\fontfamily{cmtt}\normalsize, numbers=left,
    numberstyle=\scriptsize\color{gray}, xleftmargin=0.4cm, numbers=none, caption={A symbolic formula for likelihood. }, label={form:likelihood}]
`$\formula_{\ell}(x) := \bigwedge\limits_{j=1}^{n} \mathcal{M}\big(x, \colorbox{red!50}{\textcolor{white}{$w_{j}$}}\big)$'
\end{lstlisting}
Because $\formula_{\ell}(x)$ is a pure conjunction, its satisfaction probability factors into the product of the probabilities of the observed token events. This gives the standard likelihood objective as a special case (see \citet{richardsoncots} for a similar result).
\begin{restatable}[\textcolor{blue}{Likelihood special case}]{prop}{likelihood}
%\begin{prop}[\textcolor{blue}{Likelihood special case}]
If each event probability is identified with the model's next-token probability,
$\mathbb{P}_{\theta}(\mathcal{M}(x,\colorbox{red!50}{\textcolor{white}{$w_{j}$}})) = \mathbb{P}_{\mathcal{M}_{\theta}}(\colorbox{red!50}{\textcolor{white}{$w_{j}$}} \mid x_{<j})$,
then the satisfaction probability of $\formula_{\ell}(x)$ is equal to the language-model likelihood of $x$:
$$
\mathbb{P}_{\theta}(\formula_{\ell}(x))
=
\prod_{j=1}^{n}
\mathbb{P}_{\mathcal{M}_{\theta}}(\colorbox{red!50}{\textcolor{white}{$w_{j}$}} \mid x_{<j}).
$$
Consequently, the semantic loss $\ell_{\mathrm{sl}}(\formula_{\ell}(x);\theta)$ is the negative log-likelihood, i.e., the (unnormalized) \textbf{\emph{cross-entropy loss}} $\ell_{\text{ce}}(\formula_{\ell}(x), \theta) = -\log P_{\mathcal{M}_{\theta}}(x)$.
\label{prop:likelihood}
\end{restatable}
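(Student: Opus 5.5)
The proof unfolds the weighted model counting definition of $\mathbb{P}_{\theta}(\formula)$ for the specific formula $\formula_{\ell}(x)$. First I fix the propositional variables. The atoms of $\formula_{\ell}(x)$ are the $n$ prediction events $\propvar_j := \mathcal{M}(x,\colorbox{red!50}{\textcolor{white}{$w_{j}$}})$ for $j=1,\ldots,n$. There is one subtlety: if two positions carry the same token and context, the atoms could coincide. The context prefixes $x_{<j}$ differ for distinct $j$, however, so the atoms are distinct. Hence there are exactly $2^n$ interpretations over $\{\propvar_1,\ldots,\propvar_n\}$.

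Next, I characterise the model set $\mathsf{I}(\formula_{\ell}(x))$. A conjunction is true exactly when every conjunct is true, so the only satisfying interpretation is $I^{\ast}$ with $I^{\ast}(\propvar_j)=\text{T}$ for all $j$. The WMC sum therefore collapses to the single term $S(I^{\ast};\theta)$. In $S(I^{\ast};\theta)$ the product over false variables is empty, hence equal to $1$, leaving
\[
\mathbb{P}_{\theta}(\formula_{\ell}(x)) = \prod_{j=1}^{n} \mathbb{P}_{\theta}(\propvar_j).
\]
Substituting the hypothesis $\mathbb{P}_{\theta}(\propvar_j)=\mathbb{P}_{\mathcal{M}_{\theta}}(w_j \mid x_{<j})$ gives the displayed identity.

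For the consequence, I apply the semantic loss definition in Eq.~\ref{eq:sl}. This gives
\[
\ell_{\mathrm{sl}}(\formula_{\ell}(x);\theta) = -\sum_{j}\log \mathbb{P}_{\mathcal{M}_{\theta}}(w_j\mid x_{<j}).
\]
By the autoregressive chain rule, $\prod_j \mathbb{P}_{\mathcal{M}_{\theta}}(w_j\mid x_{<j}) = P_{\mathcal{M}_{\theta}}(x)$. The loss is therefore $-\log P_{\mathcal{M}_{\theta}}(x)$, the summed (unnormalised, not divided by $n$) token-level cross-entropy against the one-hot observed tokens.

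The main obstacle is not computational but interpretive: keeping the distinctness of atoms and the empty-product convention explicit. I would also note that when some next-token probability is $0$, both sides of the loss identity equal $+\infty$, so the statement holds in the extended reals.
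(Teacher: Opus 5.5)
Your proof is correct and takes the same route as the paper. The paper gives no appendix proof and justifies the proposition only by the in-text remark that a pure conjunction's satisfaction probability factors into the product of its event probabilities (its only model is the all-true assignment), after which the autoregressive chain rule and $-\log$ give the loss identity. Your explicit checks make that one-line argument rigorous: that the $n$ atoms are distinct because their prefixes $x_{<j}$ differ, that the empty product over false variables equals $1$, and that zero probabilities are handled in the extended reals.
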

%\end{prop}
\end{example}
Thus, ordinary next-token training is recovered in \modellog \update{in the case where the} declarative theory is a conjunction of observed token events. \update{We note that other likelihood-like losses, such as unlikelihood loss from \citet{welleck2019neural}, can be expressed as a similar conjunctive formula extended with logical negation.}  This restates the \textbf{central puzzle of cross-entropy} in semantic terms: \emph{why should optimizing satisfaction of this very particular kind of formula -- \update{a simple conjunction of prediction events} -- produce models that satisfy much richer semantic constraints?} Later, we argue that part of the answer \update{may lie} in the inherent informativeness of likelihood formulas. To make this idea precise, we will compare a model’s satisfaction probability against the probability of satisfying the same formula by chance. This motivates an \textbf{uninformed prior}: the weighted model count of a formula $\formula$ under uniform weights, where each prediction variable has probability \texttt{0.5}:
\begin{align}
\text{WMC}_{0}(\formula) := \frac{\mid \mathsf{I}(\formula) \mid}{2^{\mid \mathrm{vars}(\formula) \mid}}.
\label{eq:uniformed_baseline}
\end{align}
Here, $\mathrm{vars}(\formula)$ is the set of \update{atomic} variables appearing in $\formula$. As illustrated in Figure~\ref{fig:wmc}\textbf{(E)}, this quantity captures the baseline satisfiability of the formula independent of a model, and it becomes central to the issues we discuss next and our evaluation. 

\subsection{Subtleties in pre-training evaluation}
\label{sec:main_issues}

The semantics above is deliberately close to standard neuro-symbolic formulations \citep{manhaeve2018deepproblog, xu2018semantic} that have been applied to language model training \citep{ahmed2023pseudo, richardson2024understanding, calanzone2025logically}. In \update{these settings}, the main role of the probabilistic semantics is to provide a differentiable signal for learning; \update{probabilities, such as the raw token likelihoods considered above, therefore need only define a useful training objective, not an interpretable or calibrated evaluation score}. In \modellog, our initial goal is measurement: $\mathbb{P}_{\theta}(\formula)$ is meant to quantify how strongly a pre-trained model satisfies a semantic constraint. This evaluation setting makes interpretability of the probabilities themselves essential, leading to the following issues involving calibration and constraint selection. 

\paragraph{What do token variables mean semantically?} Treating local token decisions as logical variables makes them Bernoulli-like, i.e., either true/false in a given context. Following \citet{richardson2024understanding}, we interpret truth as local validity, or whether a token is an acceptable continuation and its probability exceeds some acceptability threshold $\epsilon$. The difficulty is that language models output categorical distributions over the vocabulary, not Bernoulli probabilities over validity events. Thus, raw token probabilities must be calibrated into event probabilities that reflect membership in the model’s locally valid region of continuations. 

We \textbf{calibrate token event probabilities} by using structural information about the local distribution to offset the odds of the raw token probability: 
\begin{align}
\hat{\mathbb{P}}_{\theta_{C}}\big({\scriptsize\mathcal{M}(x, \colorbox{red!50}{\textcolor{white}{$w_{j}$}})}\big) \hspace{-.1cm} &:= \sigma\bigg( \text{logit}\big(\mathbb{P}_{\mathcal{M}_{\theta}}({\scriptsize \colorbox{red!50}{\textcolor{white}{$w_{j}$}} \mid x})\big) + C \bigg) \nonumber \\ 
       &\propto e^{C} \mathbb{P}_{\mathcal{M}_{\theta}}({\scriptsize \colorbox{red!50}{\textcolor{white}{$w_{j}$}} \mid x}).
\label{eq:calibration}
\end{align}
Here, $\text{logit}(\cdot)$ corresponds to the log odds of the raw token probability and $C$ is an offset term representing the size of the valid region of continuations. Intuitively, larger values of $C$ increase the odds of a token being treated as valid, correcting for the fact that raw probability mass may be spread across many plausible continuations. The calibrated value therefore estimates live-region membership rather than raw next-token likelihood.

While different choices can be made for $C$, we use the entropy of the top-$p$ tokens or \textbf{nucleus} of the local distribution \citep{holtzman2019curious}, which we later refer to as \textbf{nucleus entropy calibration} and discuss below through an example. 

\begin{example}[\textcolor{blue}{Token calibration via nucleus entropy}]
For the prefixes \textcolor{gray}{\emph{A computer \underline{is not} / \underline{is} an \_\_}}, the completion \emph{airplane} is semantically valid only in the negative context. Yet under \texttt{SmolLM2-135M} \citep{lozhkovsmollm2}, its raw probabilities are small in both contexts: \texttt{0.0055} after \textcolor{gray}{\emph{\underline{is not}}} and \texttt{3.2678e-05} after \textcolor{gray}{\emph{\underline{is}}}. Nucleus entropy calibration rescales these probabilities by the effective size of the local live region: entropies of \texttt{3.97} and \texttt{4.11} for the top-$p$ tokens (with nucleus $p=0.95$) correspond to perplexities, $e^H$, of roughly \texttt{53} and \texttt{61}, estimating the number of plausible alternatives in the nucleus. This yields calibrated event probabilities of \texttt{0.58} and \texttt{0.001}, recovering the intended contrast: \emph{airplane} is plausible in the negative context but not the affirmative one.
\end{example}

\paragraph{What do constraint probabilities tell us?}
\update{A second issue is that raw constraint probabilities conflate model behavior with formula structure: $\propvar_{1} \, \logicor \, \propvar_{2}$ can receive higher raw probability than $\propvar_{1} \, \logicand \, \propvar_{2}$ because it admits more satisfying assignments}. In learning, \update{analogous effects appear as \emph{reasoning shortcuts}} \citep{van2024independence, marconato2023not, marconato2025symbol}, \update{which} create spurious learning patterns; such issues arise in evaluation too. We therefore compare model-weighted satisfaction to the uninformed baseline $\mathrm{WMC}_{0}(\formula)$  from Eq.~\ref{eq:uniformed_baseline} and define our main \textbf{probabilistic consistency metric} $\rho$: 
\begin{align}
\rho(\formula;\theta_{C}) = \frac{\mathrm{WMC}(\formula; \theta_{C}) - \mathrm{WMC}_{0}(\formula)}{1 - \mathrm{WMC}_{0}(\formula)}.
\label{eq:prob_consistency}
\end{align}
This measures the \update{fraction of} possible improvement over chance satisfaction \update{achieved by the model}: $\rho=0$ matches the uninformed baseline, $\rho=1$ is perfect satisfaction. We also use a hard, \textbf{accuracy-like consistency metric},
\begin{align}
\mathrm{CAcc}(\formula;\theta_{C})
= \mathbbm{1}\!\left[
\mathrm{WMC}(\formula;\theta_{C}) > \mathrm{WMC}_{0}(\formula)\right],
\label{eq:cacc}
\end{align}
which records whether the model satisfies the constraint better than chance. Averaging this over examples gives a dataset-level score, which we report later in \S~\ref{sec:findings} and Table~\ref{tab:wmc_main_agg}.

\paragraph{How can we determine if a constraint is useful?} In \modellog, experiment designers must come up with specific constraints to test. A natural question then is: \emph{how can we know if a constraint is meaningful to use for testing?} Intuitively, one should prioritize constraints that are inherently informative and hence not subject to spurious satisfaction. Using the uninformed baseline, we define the notion of \textbf{constraint informativeness}: $I(\formula) := -\log_{2} \mathrm{WMC}_{0}(\formula)$.  

Here $I(\formula)$ measures how many bits of information are gained by knowing that $\formula$ holds under an otherwise uniform assignment. As we discuss in the example below, this gives us a useful tool for reasoning about what to test and how it might connect to learning. 

\begin{example}[\textcolor{blue}{Constraint informativeness}]
For a sequence $x = w_{1}, \ldots, w_{n}$, the likelihood formula $\formula_{\ell}(x)$ from Formula~\ref{form:likelihood} requires every observed token-prediction event $\mathcal{M}(x, {\scriptsize \colorbox{red!50}{\textcolor{white}{$w_{j}$}}})$ to hold. It is therefore a complete assignment, or minterm, and is maximally informative among satisfiable constraints over these variables (see \S~\ref{app:proofs}).

\begin{restatable}[\textcolor{blue}{Likelihood is a maximally informative minterm}]{prop}{maxinformativeness}
Let $\formula_{\ell}(x)$ be the likelihood formula for a text sequence $x$ over $n$ observed token-prediction events. For any satisfiable formula $\formula(x)$ over the same variables: 
$$I(\formula(x)) \leq I(\formula_{\ell}(x)) = n.$$
Moreover, equality holds iff $\formula(x)$ has exactly one satisfying interpretation.
\label{prop:cemax}
\end{restatable}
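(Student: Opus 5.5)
The plan is to work directly from the definition $I(\formula) = -\log_2 \mathrm{WMC}_0(\formula)$ together with Eq.~\ref{eq:uniformed_baseline}. Everything then reduces to counting satisfying interpretations. There is one subtlety to settle first: Eq.~\ref{eq:uniformed_baseline} normalizes by $2^{|\mathrm{vars}(\formula)|}$. Since the statement says $\formula(x)$ is ``over the same variables'' as $\formula_{\ell}(x)$, we read both formulas as interpreted over the common set $V = \{\mathcal{M}(x,w_1),\ldots,\mathcal{M}(x,w_n)\}$, with $|V| = n$. If $\formula(x)$ mentions only a subset of $V$, the count of its models over $V$ is a multiple of the count over its own variables, with the same ratio. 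So $\mathrm{WMC}_0$ does not change when we pad the variable set, and we may assume the normalizer is $2^n$ for both formulas.

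First we compute $I(\formula_{\ell}(x))$. Because $\formula_{\ell}(x)=\bigwedge_{j=1}^n \mathcal{M}(x,w_j)$ is a pure conjunction of $n$ distinct positive literals, it is satisfied by exactly one interpretation, the all-true assignment. Hence $|\mathsf{I}(\formula_{\ell}(x))| = 1$, so $\mathrm{WMC}_0(\formula_{\ell}(x)) = 2^{-n}$ and $I(\formula_{\ell}(x)) = n$. The same computation also follows from Prop.~\ref{prop:likelihood} with every event probability set to $0.5$, since the product then gives $2^{-n}$.

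Next we treat an arbitrary satisfiable $\formula(x)$ over $V$. Satisfiability gives $|\mathsf{I}(\formula(x))| \ge 1$, so $\mathrm{WMC}_0(\formula(x)) = |\mathsf{I}(\formula(x))|/2^n \ge 2^{-n}$. Because $-\log_2$ is strictly decreasing on $(0,1]$, this yields $I(\formula(x)) = n - \log_2|\mathsf{I}(\formula(x))| \le n$. Equality holds iff $\log_2|\mathsf{I}(\formula(x))| = 0$, that is, iff $|\mathsf{I}(\formula(x))| = 1$. This proves both directions of the ``iff''. The satisfiability hypothesis is exactly what keeps $I$ finite; an unsatisfiable formula would have $I = \infty$.

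The only real obstacle is the variable-set bookkeeping described at the start. Without fixing the common variable set $V$, a formula over fewer variables, such as a single literal, would be normalized by a smaller power of two. The statement must therefore be read over $V$, and the padding argument is what justifies that reading. We would state this convention explicitly at the beginning of the proof.
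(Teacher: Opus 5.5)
Your proof is correct and follows the same route as the paper's: satisfiability gives $|\mathsf{I}(\formula)|\geq 1$, hence $\mathrm{WMC}_{0}(\formula)\geq 2^{-n}$ and $I(\formula)\leq n$, with equality exactly when there is a single satisfying interpretation; the conjunction $\formula_{\ell}$ attains this bound through the all-true assignment. Your padding remark, which fixes the common variable set $V$ as the normalizer, makes explicit a convention the paper simply assumes when it writes ``any satisfiable formula over $n$ variables.'' It is also needed for the ``iff'' clause to be read correctly.
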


Thus, likelihood is maximally informative \update{because it corresponds to a complete assignment of the observed token events}. Richer constraints, such as implications or biconditionals, may test different structure but typically admit more satisfying interpretations and are therefore less informative in this formal sense. This reframes likelihood-style next token prediction as optimizing satisfaction of a maximally informative positive constraint from the text.
\end{example}

More generally, informativeness provides a way to reason about the logical strength of evaluation constraints. If one formula \emph{semantically entails} ($\models$) another, then satisfying the stronger formula also guarantees satisfaction of the weaker one. This gives the following monotonicity property:
\begin{restatable}[\textcolor{blue}{Monotonicity of satisfaction and informativeness}]{prop}{monotonicity}
For any two formulas $\formula_{1}$ and $\formula_{2}$ over the same variables, if $\formula_{1} \models \formula_{2}$, then
$$\mathbb{P}_{\theta}(\formula_{1}) \leq \mathbb{P}_{\theta}(\formula_{2})
\quad\text{and}\quad I(\formula_{1}) \geq I(\formula_{2}).
$$
\end{restatable}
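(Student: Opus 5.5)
The plan is to reduce both inequalities to a single set inclusion between satisfying interpretations. Write $\mathsf{I}(\formula)$ for the set of interpretations over the common variable set $V=\mathrm{vars}(\formula_{1})\cup\mathrm{vars}(\formula_{2})$ that satisfy $\formula$. The first step is to observe that, by the definition of semantic entailment, $\formula_{1}\models\formula_{2}$ means every interpretation satisfying $\formula_{1}$ also satisfies $\formula_{2}$. Hence $\mathsf{I}(\formula_{1})\subseteq\mathsf{I}(\formula_{2})$ as sets of assignments to $V$.

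For the satisfaction inequality, I will use the weighted model counting definition $\mathbb{P}_{\theta}(\formula)=\sum_{I\in\mathsf{I}(\formula)}S(I;\theta)$. Each weight $S(I;\theta)$ is a product of factors $\mathbb{P}_{\theta}(\propvar)$ and $1-\mathbb{P}_{\theta}(\propvar)$, all in $[0,1]$, so $S(I;\theta)\ge 0$. Splitting the sum over $\mathsf{I}(\formula_{2})$ gives
\[
\mathbb{P}_{\theta}(\formula_{2})=\mathbb{P}_{\theta}(\formula_{1})+\sum_{I\in\mathsf{I}(\formula_{2})\setminus\mathsf{I}(\formula_{1})}S(I;\theta)\ \ge\ \mathbb{P}_{\theta}(\formula_{1}).
\]
This argument uses no property of $\theta$ beyond the event probabilities lying in $[0,1]$. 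It therefore holds for raw probabilities, for the calibrated $\hat{\mathbb{P}}_{\theta_{C}}$, and in particular for the uniform weights $0.5$.

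For the informativeness inequality, I will specialize the previous step to uniform weights. This gives $\mathrm{WMC}_{0}(\formula_{1})\le\mathrm{WMC}_{0}(\formula_{2})$, which is also immediate from Eq.~\ref{eq:uniformed_baseline}, since $|\mathsf{I}(\formula_{1})|\le|\mathsf{I}(\formula_{2})|$ over the same denominator $2^{|V|}$. Because $-\log_{2}$ is monotonically decreasing, it follows that $I(\formula_{1})\ge I(\formula_{2})$.

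The one edge case is an unsatisfiable $\formula_{1}$. Then $\mathrm{WMC}_{0}(\formula_{1})=0$, and I adopt the convention $I(\formula_{1})=+\infty$, so the inequality still holds.

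The main subtlety I expect is the phrase "over the same variables." If the formulas mention different atoms, the counts in Eq.~\ref{eq:uniformed_baseline} would use different denominators, so I must evaluate both on the common set $V$. This is harmless: adding a variable that does not occur in a formula doubles both $|\mathsf{I}(\formula)|$ and $2^{|\mathrm{vars}|}$, and the extra factors $\mathbb{P}_{\theta}(\propvar)+(1-\mathbb{P}_{\theta}(\propvar))$ in the WMC sum equal $1$. Hence both $\mathbb{P}_{\theta}$ and $\mathrm{WMC}_{0}$ are invariant under this extension.
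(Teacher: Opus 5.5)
Your proof is correct and follows the paper's own argument. Entailment gives $\mathsf{I}(\formula_{1}) \subseteq \mathsf{I}(\formula_{2})$, nonnegativity of the interpretation weights gives the satisfaction inequality, and specializing to uniform weights and applying the decreasing map $-\log_{2}$ gives the informativeness inequality; your remarks on an unsatisfiable $\formula_{1}$ and on extending both formulas to a common variable set are sound refinements that the paper leaves implicit.
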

This highlights an important difference between learning and evaluation. For learning, highly informative constraints may be desirable because they can induce stronger losses, \update{a topic we revisit in Section~\ref{sec:discussion}}. For evaluation, however, the most informative constraint is not always the most useful diagnostic. A very strong formula may simply fail, while weaker entailed constraints can reveal partial semantic knowledge and provide a more interpretable picture of model behavior. \update{As a simple but useful consequence, monotonicity can be used constructively: by weakening the likelihood conjunction, we obtain likelihood-style upper bounds that provide principled new evaluation targets.}

\begin{example}[\textcolor{blue}{Deriving novel likelihood bounds}]
\update{Let $\bigvee_{\geq k}$ denote a threshold disjunction, true when at least $k$ of its arguments are true:}
\begin{lstlisting}[language=problang, escapeinside={`}{'},
    basicstyle=\fontfamily{cmtt}\normalsize, numbers=none,
    caption={A relaxed form of Formula~\ref{form:likelihood}.}, label={form:at_least_k}]
`$\formula_{\geq k}(x) := \sideset{}{_{\geq k}}\bigvee\limits_{j=1}^{n} \mathcal{M}(x,\colorbox{red!50}{\textcolor{white}{$w_j$}})$'
\end{lstlisting}
Here $k=1$ gives ordinary disjunction ($\logicor$) and $k=n$ gives $\formula_{\ell}(x)$, which is ordinary sequence likelihood under the raw weighting from Prop.~\ref{prop:likelihood}. Since $\formula_{\geq n}(x)\models \cdots \models\formula_{\geq 1}(x)$, monotonicity gives a hierarchy of increasingly tight upper bounds:
$$
\underbrace{\mathbb{P}_{\theta}(\formula_{\geq 1}(x))}_{\logicor}
\geq
\mathbb{P}_{\theta}(\formula_{\geq 2}(x))
\geq
... 
\geq
\underbrace{\mathbb{P}_{\theta}(\formula_{\geq n}(x))}_{\logicand}.
$$
\update{While simple, this shows how bounds that are not obvious from the usual product form of likelihood become immediate once likelihood is represented semantically as a logical conjunction. This relates in spirit to selective-modeling objectives that train on subsets of tokens \citep{lin2024rho}, and shows how such relaxations can be derived semantically.}
\end{example}

\begin{table*}
\centering 
{\scriptsize
\begin{tabular}{| l l l l |}
        \hline 
        \multicolumn{1}{|c}{\textbf{Task Subset}} & \multicolumn{1}{c}{\textbf{Constraint } $\formula$}  & $I(\formula) \approx$ & \multicolumn{1}{c|}{\textbf{Example}} \\ \hline 
        \textbf{Transitivity} & \begin{tabular}[t]{@{}l@{}}$(\mathcal{M}(x_1, \colorbox{red!50}{\tiny\bfseries\color{white}$y_1$}) \land \mathcal{M}(x_2, \colorbox{blue!50}{\tiny\bfseries\color{white}$y_2$}))$ \\ $\implication \mathcal{M}(x_1, \colorbox{blue!50}{\tiny\bfseries\color{white}$y_2$})$\end{tabular} & \texttt{0.193} & \begin{tabular}[t]{@{}l@{}}A Golden Retriever is a $\colorbox{red!50}{\tiny\bfseries\color{white}dog}$ $\land$ Every dog is a $\colorbox{blue!50}{\tiny\bfseries\color{white}mammal}$ \\ $\implication$ A Golden Retriever is a $\colorbox{blue!50}{\tiny\bfseries\color{white}mammal}$\end{tabular} \\ 
        \textbf{Forward Implication} & $\mathcal{M}(x, \colorbox{red!50}{\tiny\bfseries\color{white}$y_1$}) \implication \mathcal{M}(x, \colorbox{blue!50}{\tiny\bfseries\color{white}$y_2$})$ & \texttt{0.415} & A Golden Retriever is a $\colorbox{red!50}{\tiny\bfseries\color{white}dog}$ $\implication$ A Golden Retriever is a $\colorbox{blue!50}{\tiny\bfseries\color{white}mammal}$ \\ 
        \textbf{Negation Consistency} & $\mathcal{M}(x, \colorbox{red!50}{\tiny\bfseries\color{white}$y$}) \oplus \mathcal{M}(\neg x, \colorbox{red!50}{\tiny\bfseries\color{white}$y$})$ & \texttt{1.0} & A Golden Retriever is a $\colorbox{red!50}{\tiny\bfseries\color{white}dog}$ $\oplus$ A Golden Retriever is not a $\colorbox{red!50}{\tiny\bfseries\color{white}dog}$ \\ \hline 
        Entity \textbf{Mutual Exclusivity} & \multirow[c]{2}{*}{$\mathcal{M}(x, \colorbox{red!50}{\tiny\bfseries\color{white}$y_1$}) \oplus \mathcal{M}(x, \colorbox{blue!50}{\tiny\bfseries\color{white}$y_2$})$} & \multirow[c]{2}{*}{\texttt{1.0}} & A Golden Retriever is a $\colorbox{red!50}{\tiny\bfseries\color{white}mammal}$ $\oplus$ A Golden Retriever is a $\colorbox{blue!50}{\tiny\bfseries\color{white}reptile}$ \\ 
        \textbf{Spatial} Mutual Exclusivity &  & & Paris is in $\colorbox{red!50}{\tiny\bfseries\color{white}Germany}$ $\oplus$ Paris is in $\colorbox{blue!50}{\tiny\bfseries\color{white}France}$ \\ \hline
    \end{tabular}
}
\caption{A description of the \probCT diagnostic benchmark in terms of its five \textbf{task subsets}, the structure of the \textbf{constraints} $\formula$ being tested with their informativeness $I(\formula)$ and an \textbf{example} set of prefixes. 
}
\label{tab:pct_examples}
\end{table*}

\begin{table*}[t]
\centering
{\footnotesize
\setlength{\tabcolsep}{6pt}

  \begin{tabular}{lccccc}
  \toprule
  Model & Transitivity & Forward Implication & Neg.\ Consistency & Mutual Exclusivity & Spatial
  Exclusivity \\
  \midrule
  \texttt{Gemma-3-270M} & \cellcolor{green!10}{$\mathbf{81.0} (\pm \textcolor{gray}{2.3})$} &
  \cellcolor{green!8}{$79.7 (\pm \textcolor{gray}{0.5})$} & $18.5 (\pm \textcolor{gray}{1.4})$ &
  $48.7 (\pm \textcolor{gray}{0.8})$ & $18.5 (\pm \textcolor{gray}{4.8})$ \\
  \texttt{Gemma-3-1B-pt} & $69.0 (\pm \textcolor{gray}{1.5})$ & $65.1 (\pm \textcolor{gray}{1.5})$ & $15.9
  (\pm \textcolor{gray}{0.8})$ & $57.3 (\pm \textcolor{gray}{3.9})$ & $12.4 (\pm
  \textcolor{gray}{4.8})$ \\
  \texttt{Gemma-3-4B-pt} & $46.9 (\pm \textcolor{gray}{1.1})$ & $50.8 (\pm \textcolor{gray}{1.7})$ & $18.2
  (\pm \textcolor{gray}{0.9})$ & $56.8 (\pm \textcolor{gray}{5.2})$ & $19.9 (\pm
  \textcolor{gray}{6.6})$ \\
  \texttt{Gemma-3-12B-pt} & \cellcolor{ai2lightpink} $40.4 (\pm \textcolor{gray}{1.6})$ & \cellcolor{ai2lightpink} $45.2 (\pm \textcolor{gray}{2.0})$ & $19.2
  (\pm \textcolor{gray}{0.9})$ & $54.7 (\pm \textcolor{gray}{5.4})$ & $23.8 (\pm
  \textcolor{gray}{7.7})$ \\
  \midrule
  \texttt{Llama-3.2-1B} & \cellcolor{green!8}{$52.7 (\pm \textcolor{gray}{2.4})$} &
  \cellcolor{green!8}{$68.7 (\pm \textcolor{gray}{1.8})$} & \cellcolor{green!8}{$\mathbf{23.1}
  (\pm \textcolor{gray}{2.1})$} & \cellcolor{green!8}{$64.0 (\pm \textcolor{gray}{3.8})$} &
  \cellcolor{green!8}{$\mathbf{31.5} (\pm \textcolor{gray}{7.4})$} \\
  \texttt{Llama-3.2-3B} & $45.5 (\pm \textcolor{gray}{1.3})$ & $60.8 (\pm \textcolor{gray}{2.1})$ & \cellcolor{ai2lightpink} $21.6
  (\pm \textcolor{gray}{2.6})$ & $61.6 (\pm \textcolor{gray}{6.0})$ & $25.3 (\pm
  \textcolor{gray}{6.9})$ \\
  \texttt{Llama-3.1-8B} & \cellcolor{ai2lightpink} $35.2 (\pm \textcolor{gray}{1.5})$ & \cellcolor{ai2lightpink} $50.0 (\pm \textcolor{gray}{1.2})$ & $22.7
  (\pm \textcolor{gray}{1.3})$ & \cellcolor{ai2lightpink} $60.0 (\pm \textcolor{gray}{5.9})$ & \cellcolor{ai2lightpink} $16.1 (\pm
  \textcolor{gray}{6.4})$ \\
  \midrule
  \texttt{Qwen3-0.6B} & \cellcolor{green!8}{$76.4 (\pm \textcolor{gray}{0.5})$} &
  \cellcolor{green!8}{$\mathbf{82.8} (\pm \textcolor{gray}{1.5})$} & $17.2 (\pm
  \textcolor{gray}{0.6})$ & $58.2 (\pm \textcolor{gray}{0.7})$ & $9.1 (\pm \textcolor{gray}{3.2})$
  \\
  \texttt{Qwen3-1.7B} & $64.2 (\pm \textcolor{gray}{1.2})$ & $75.3 (\pm \textcolor{gray}{1.6})$ & $18.9 (\pm
  \textcolor{gray}{3.1})$ & $62.9 (\pm \textcolor{gray}{2.1})$ & $25.9 (\pm \textcolor{gray}{7.2})$
  \\
  \texttt{Qwen3-4B} & $60.8 (\pm \textcolor{gray}{3.2})$ & $73.0 (\pm \textcolor{gray}{2.6})$ & $19.3 (\pm
  \textcolor{gray}{0.6})$ & $67.7 (\pm \textcolor{gray}{5.0})$ & $16.6 (\pm \textcolor{gray}{5.1})$
  \\
  \texttt{Qwen3-8B} & \cellcolor{ai2lightpink} $53.7 (\pm \textcolor{gray}{2.2})$ & $69.6 (\pm \textcolor{gray}{2.7})$ & $20.8 (\pm
  \textcolor{gray}{0.1})$ & $\mathbf{68.2} (\pm \textcolor{gray}{4.3})$ & $14.8 (\pm
  \textcolor{gray}{5.3})$ \\
  \texttt{Qwen3-14B} & $56.4 (\pm \textcolor{gray}{2.1})$ & \cellcolor{ai2lightpink} $64.1 (\pm \textcolor{gray}{3.6})$ & $20.9 (\pm
  \textcolor{gray}{1.2})$ & $65.9 (\pm \textcolor{gray}{5.6})$ & $12.6 (\pm \textcolor{gray}{5.4})$
  \\
  \bottomrule
  \end{tabular}
}
\caption{Main results (\text{CAcc} \%) across the different reasoning tasks in \probCT and model families, averaged over different nucleus values $p \in \{0.8, 0.85, 0.9, 0.95\}$ used for calibration.}
\label{tab:wmc_main_agg}
\end{table*}

\section{Testing the framework}
\label{sec:testing}

To evaluate \modellog empirically, we introduce \probCT (\textbf{Prob}abilistic \textbf{C}onsistency \textbf{T}ests), a diagnostic suite described in \S~\ref{sec:probct}. We then describe our experimental setup in \S~\ref{sec:exp}\footnote{\update{Our code and data are available at
\url{https://github.com/cullena20/ModelLog} % PUBLIC
%\url{https://anonymous.4open.science/r/ModelLog-458E} % ANONYMOUS
}.}.

\subsection{A case study on \probCT}
\label{sec:probct}

Following prior work on knowledge probing \update{\citep{petroni2019language, kassner2020negated, elazar2021measuring}}, we automatically generate tasks from factual knowledge graphs (\S~\ref{sec:kbs}). As shown in Table~\ref{tab:pct_examples}, \probCT contains five task subsets, inspired by prior work, that cover constraints with different logical structures and levels of informativeness: \textbf{transitivity} \citep{li2019logic}, \textbf{forward implication} \citep{richardson2020probing}, \textbf{negation consistency} \citep{kassner2020negated}, and two forms of \textbf{mutual exclusivity} involving exclusive-or ($\oplus$) reasoning \citep{xu2018semantic}. 

We apply \update{LLM-as-a-judge style} filtering with \texttt{GPT-5o-mini} to improve correctness and naturalness of the generated data (\S~\ref{sec:llm_judge}), \update{yielding around \texttt{500-1000} examples per subset.} As a check on quality, a subset of the authors validated 50 randomly sampled examples, finding overall \update{high quality and acceptance} (\S~\ref{sec:human_ann}). Finally, we restrict examples to single-token substitutions, producing the model-specific \update{splits further detailed in \S~\ref{sec:sizes}}.

\subsection{Models and Experimental Setup}
\label{sec:exp}

We evaluate the following pre-trained models: \texttt{Gemma-3} \citep{gemmateam2025gemma3technicalreport}, \texttt{Llama-3} \citep{grattafiori2024llama}, and \texttt{Qwen3} \citep{yang2025qwen3}, \update{with parameter sizes from} \texttt{270M} to \texttt{14B}. For each model and \probCT subset, we report average constraint accuracy \textrm{CAcc} (Eq.~\ref{eq:cacc}) and per-example probabilistic consistency $\rho$ (Eq.~\ref{eq:prob_consistency}). Unless noted, all results use nucleus-entropy calibration (\S~\ref{sec:main_issues}); for \textrm{CAcc}, we report mean performance across different nucleus thresholds $p$ with standard deviations.

\section{Results and Findings}
\label{sec:findings}

Our main results on \probCT are reported in Table~\ref{tab:wmc_main_agg} and Figure~\ref{fig:qwen_base_constraints_main} (see also \S~\ref{sec:histograms}). 

\begin{figure*}[t]
\centering

\wmcplotrow{Gemma-3-12B-pt}{cal_nucleus_entropy095}
\wmcplotrow{Llama-3.1-8B}{cal_nucleus_entropy095}
\wmcplotrowlabeled{Qwen3-14B}{cal_nucleus_entropy095}

\caption{Probabilistic Consistency score histograms $\rho(\cdot, \theta_{C})$ for our largest models in each model family, using nucleus value $p=0.95$. Points highlighted in red are inconsistent, while points highlighted in blue are consistent. The solid vertical black line corresponds to the average probabilistic consistency (see also \S~\ref{sec:histograms}).}
\label{fig:qwen_base_constraints_main}
\end{figure*}

\paragraph{Models often fail declarative consistency tests.}

Across model families, performance is low on several \probCT subsets, especially on \textbf{negation consistency} and \textbf{spatial exclusivity}. For example, in the latter case, the largest \texttt{Gemma-3} and \texttt{Qwen3} models improve over the uninformed baseline on only \texttt{23.8}\% and \texttt{12.6}\% of examples, respectively. Figure~\ref{fig:qwen_base_constraints_main} shows that these failures are not only aggregate effects: many examples have negative probabilistic consistency scores $\rho$, meaning that the model supports the constraint less than the uninformed baseline. Thus, \modellog reveals failures not simply in individual token predictions, but in the semantic relations among predictions.

\begin{table}[t]
\centering
\small
\setlength{\tabcolsep}{2pt}
{\footnotesize
\begin{tabular}{lccc}
\toprule
Model & $\mathbb{P}_{\theta_{C}}(\propvar_{2}){-}\mathbb{P}_{\theta_{C}}(\propvar_{1})$ & $\mathbb{P}_{\theta_{C}}(\propvar_{2}{>}\propvar_{1})$ & $\textrm{CAcc}$ \\
\midrule
\texttt{Gemma-3-270M}   & $0.17$  & $64.7$ & $75.4$ \\
\texttt{Gemma-3-1B-pt}  & $-0.02$ & $49.5$ & $57.8$ \\
\texttt{Gemma-3-4B-pt}  & $-0.19$ & $34.7$ & $44.1$ \\
\texttt{Gemma-3-12B-pt} & $-0.24$ & $30.1$ & $38.3$ \\
\bottomrule
\end{tabular}
}
\caption{Inverse-scaling \citep{mckenzie2023inverse} \update{behavior} for the forward implication: $\propvar_{1} \implication \propvar_{2}$ for \texttt{Gemma-3}. $\mathbb{P}_{\theta_{C}}(\propvar_{2}){-}\mathbb{P}_{\theta_{C}}(\propvar_{1})$ is the mean difference between conclusion and premise probabilities. $\mathbb{P}_{\theta_{C}}(\propvar_{2}{>}\propvar_{1})$ is the percentage of points where $\mathbb{P}_{\theta_{C}}(\propvar_{2})>\mathbb{P}_{\theta_{C}}(\propvar_{1})$. \update{See \S~\ref{sec:inv_scaling} for similar patterns across our full set of models.}
}
\label{tab:fi_diagnostic_gemma}
\end{table}
\paragraph{Consistency can inverse-scale.}

A more surprising pattern is that larger models are not always more consistent. On \textbf{transitivity} and \textbf{forward implication}, \textrm{CAcc} often decreases with model scale across all three families. Further analysis suggests that this reflects systematic changes in how models distribute probability across premise and conclusion facts (see an example in Table~\ref{tab:inv_scale_examples}). For implication constraints $\propvar_{1} \implication \propvar_{2}$, where $\propvar_{1}$ is a more specific fact and $\propvar_{2}$ a broader consequence, smaller models are more likely to assign higher probability to $\propvar_{2}$ than to $\propvar_{1}$. As shown in Table~\ref{tab:fi_diagnostic_gemma}, this trend reverses with scale: for example, $\mathbb{P}_{\theta_{C}}(\propvar_{2})-\mathbb{P}_{\theta_{C}}(\propvar_{1})$ drops from \texttt{0.17} to \texttt{-0.24} across \texttt{Gemma-3}. Similar patterns are found for our other models (Table~\ref{tab:fi_diagnostic}) and for the \textbf{transitivity} rule (Table~\ref{tab:tr_diagnostic}).

\update{Importantly, this should not be read as showing that larger models are worse reasoners overall. Larger models may distribute probability differently across logically related predictions, sometimes reducing consistency under one constraint while improving other aspects of robustness. \modellog makes such shifts measurable, providing a framework for testing how model scale changes the semantic structure of probabilistic predictions.}

\paragraph{Findings are stable across calibration choices.}

\update{Results are stable across nucleus thresholds, suggesting that the observed patterns are not artifacts of calibration \update{(for results across different calibration methods, see \S~\ref{sec:other_calibration})}. Taken together, these case studies show that \modellog can reveal systematic failures, inverse-scaling patterns, and qualitative changes in model behavior that are difficult to capture with standard evaluation metrics.}

\section{Discussion and Future Directions}
\label{sec:discussion}

\update{We began with the puzzle of cross-entropy: \emph{why does a local next-token objective provide such an effective learning signal, and can richer semantic losses improve pre-training?} Although we focused on evaluation and the semantics of model constraints, \modellog defines evaluation scores as differentiable satisfaction probabilities, connected through the semantic loss $\ell_{\text{sl}}(\formula; \theta)$ (Eq.~\ref{eq:sl}). We end by returning to this learning question, showing that concepts introduced for evaluation, such as the uninformed baseline $\mathrm{WMC}_{0}(\formula)$ and constraint informativeness $I(\formula)$, reappear directly in the gradients of semantic loss $\nabla \ell_{\text{sl}}(\formula; \theta)$. This suggests that they are not only diagnostic quantities, but also potential tools for studying how the evaluation failures observed above might be turned into more effective training strategies.}

\paragraph{Informativeness shapes semantic-loss gradients.}
The first connection is that constraint informativeness contributes to the potential scale of the semantic-loss gradient. Since gradients are vector-valued, we measure this scale standardly using a norm. Highly informative constraints are hard to satisfy by chance, so failures on such constraints can receive larger loss amplification. More precisely, we have the following bound (see proof in \S~\ref{app:proofs}):
\begin{restatable}[\textcolor{blue}{Gradient scaling by informativeness}]{prop}{gradientscale}
%\begin{prop}[\textcolor{blue}{Gradient scaling by informativeness}]
For any formula $\formula$ such that $\mathrm{WMC}(\formula;\theta) \geq \mathrm{WMC}_{0}(\formula)$,
$$
\left\Vert \nabla_{\theta}\ell_{\mathrm{sl}}(\formula;\theta) \right\Vert
\leq
2^{I(\formula)}
\left\Vert
\nabla_{\theta}\mathrm{WMC}(\formula;\theta)
\right\Vert.
$$
\end{restatable}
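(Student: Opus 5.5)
The bound follows from the chain rule applied to the logarithm in the semantic loss (Eq.~\ref{eq:sl}), together with the definition of informativeness. Since $\ell_{\mathrm{sl}}(\formula;\theta) = -\log \mathrm{WMC}(\formula;\theta)$, we have
$$
\nabla_{\theta}\ell_{\mathrm{sl}}(\formula;\theta) = -\frac{1}{\mathrm{WMC}(\formula;\theta)}\,\nabla_{\theta}\mathrm{WMC}(\formula;\theta).
$$
This identity needs $\mathrm{WMC}(\formula;\theta) > 0$. That holds under the hypothesis whenever $\formula$ is satisfiable, because then $\mathrm{WMC}_{0}(\formula) = |\mathsf{I}(\formula)|/2^{|\mathrm{vars}(\formula)|} > 0$. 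If $\formula$ is unsatisfiable, $I(\formula)=\infty$ and the bound holds trivially, or we simply exclude that case. Taking norms, which are absolutely homogeneous under scalar multiplication, gives
$$
\left\Vert \nabla_{\theta}\ell_{\mathrm{sl}}(\formula;\theta)\right\Vert = \frac{1}{\mathrm{WMC}(\formula;\theta)}\left\Vert \nabla_{\theta}\mathrm{WMC}(\formula;\theta)\right\Vert .
$$

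Next we bound the scalar prefactor. The hypothesis $\mathrm{WMC}(\formula;\theta) \geq \mathrm{WMC}_{0}(\formula) > 0$ gives $1/\mathrm{WMC}(\formula;\theta) \leq 1/\mathrm{WMC}_{0}(\formula)$. By definition $I(\formula) = -\log_{2}\mathrm{WMC}_{0}(\formula)$, so $1/\mathrm{WMC}_{0}(\formula) = 2^{I(\formula)}$. Substituting into the displayed identity yields the claimed inequality.

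The only potential obstacle is differentiability: $\mathrm{WMC}(\formula;\theta)$ must be differentiable in $\theta$. This holds because it is a multilinear polynomial in the event probabilities $\mathbb{P}_{\theta}(\propvar)$, and these are differentiable functions of $\theta$ (raw softmax outputs or the calibrated sigmoid of Eq.~\ref{eq:calibration}). I would also remark that the hypothesis is essential: without it the prefactor $1/\mathrm{WMC}(\formula;\theta)$ is unbounded. Finally, by Prop.~\ref{prop:cemax}, the likelihood formula attains the largest possible factor, $2^{n}$.
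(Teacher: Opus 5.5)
Your proposal is correct and follows the paper's proof step for step: apply the chain rule to $-\log \mathrm{WMC}(\formula;\theta)$, take norms, and use the hypothesis to bound $1/\mathrm{WMC}(\formula;\theta)$ by $1/\mathrm{WMC}_{0}(\formula) = 2^{I(\formula)}$. Your remarks on positivity and differentiability fill in details the paper leaves implicit, though for unsatisfiable $\formula$ the loss and its gradient are undefined, so it is cleaner to exclude that case than to call the bound trivial there.
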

%\end{prop}

This bound shows that informativeness appears as an exponential amplification term in the possible learning signal induced by a constraint. It does not imply that more informative constraints always produce larger or better updates: the actual gradient can be reweighted externally and also depends on the local derivative of the satisfaction probability, which may point in different directions across noisy examples. Still, the result suggests that highly informative constraints, such as the likelihood constraint (Formula~\ref{form:likelihood}), can provide strong learning signals when their gradients are coherent. This may therefore help explain part of their success.

\paragraph{Informativeness appears directly in the gradient.}
The previous bound controls gradient scale; the next identity decomposes the gradient into interpretable factors. First, we define the following:
\[
A(\formula;\theta)
=
\frac{\mathrm{WMC}_{0}(\formula)}
{\mathrm{WMC}(\formula;\theta)}.
\]
This compares the model's satisfaction probability to the uninformed baseline: $A(\formula;\theta)<1$ means that the model satisfies the constraint better than chance, while $A(\formula;\theta)>1$ means that it does worse. Based on this, the semantic-loss gradient then factors as follows:
%\begin{prop}[\textcolor{blue}{Semantic-loss gradient identity}]
\begin{restatable}[\textcolor{blue}{Semantic-loss gradient identity}]{prop}{slidentity}
For any formula $\formula$,
$$
\nabla_{\theta}\ell_{\mathrm{sl}}(\formula;\theta) =-A(\formula;\theta)\, 2^{I(\formula)} \,
\nabla_{\theta}\mathrm{WMC}(\formula;\theta).
$$
\end{restatable}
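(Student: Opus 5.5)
The identity follows from the chain rule for the logarithm together with the definitions of $I(\formula)$ and $A(\formula;\theta)$; no structural property of $\formula$ is needed.

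\emph{Step 1 (chain rule).} Starting from Eq.~\ref{eq:sl}, and assuming $\mathrm{WMC}(\formula;\theta)>0$ so the loss is differentiable, we get
\[
\nabla_{\theta}\ell_{\mathrm{sl}}(\formula;\theta) = -\frac{1}{\mathrm{WMC}(\formula;\theta)}\,\nabla_{\theta}\mathrm{WMC}(\formula;\theta).
\]

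\emph{Step 2 (rewrite the scalar prefactor).} By definition $I(\formula) = -\log_2 \mathrm{WMC}_0(\formula)$, so $2^{I(\formula)} = 1/\mathrm{WMC}_0(\formula)$. This uses that $\formula$ is satisfiable, so that $\mathrm{WMC}_0(\formula)>0$ and $I(\formula)$ is finite. Multiplying and dividing by $\mathrm{WMC}_0(\formula)$ gives
\[
\frac{1}{\mathrm{WMC}(\formula;\theta)} = \frac{\mathrm{WMC}_0(\formula)}{\mathrm{WMC}(\formula;\theta)}\cdot\frac{1}{\mathrm{WMC}_0(\formula)} = A(\formula;\theta)\,2^{I(\formula)}.
\]
Substituting this into Step~1 yields the stated identity.

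\emph{Step 3 (interpretation and role of $\mathrm{WMC}_0$).} $\mathrm{WMC}_0(\formula)$ depends only on the formula's structure, not on $\theta$. It is therefore a constant under $\nabla_\theta$, which is what allows it to be inserted freely in Step~2. For completeness, the note should also observe that $\nabla_\theta \mathrm{WMC}$ exists. This holds because $\mathrm{WMC}$ is a polynomial in the event probabilities $\mathbb{P}_\theta(\propvar)$, and these are differentiable in $\theta$ through the calibration map in Eq.~\ref{eq:calibration}.

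\emph{Main obstacle: domain conditions.} The identity needs $\mathrm{WMC}(\formula;\theta)>0$ for the logarithm and $\formula$ satisfiable for $I(\formula)$ to be finite. Both should be stated as implicit assumptions. Under the sigmoid calibration every event probability lies strictly in $(0,1)$, so every interpretation receives positive weight, and satisfiability then implies $\mathrm{WMC}(\formula;\theta)>0$.

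As a consistency check, the earlier gradient-scaling bound follows immediately from the identity: when $\mathrm{WMC}(\formula;\theta)\ge \mathrm{WMC}_0(\formula)$ we have $A(\formula;\theta)\le 1$, and taking norms gives the bound.
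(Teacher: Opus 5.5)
Your proposal is correct and follows essentially the same route as the paper: apply the chain rule to $-\log \mathrm{WMC}(\formula;\theta)$, then rewrite $1/\mathrm{WMC}(\formula;\theta)$ as $A(\formula;\theta)\,2^{I(\formula)}$ using $2^{I(\formula)} = 1/\mathrm{WMC}_{0}(\formula)$. Your explicit domain conditions ($\formula$ satisfiable and $\mathrm{WMC}(\formula;\theta)>0$) are a useful addition that the paper leaves implicit; note, though, that Step~2 is a pointwise scalar identity and does not actually require $\mathrm{WMC}_{0}(\formula)$ to be constant in $\theta$.
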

%\end{prop}
\noindent Thus, the gradient decomposes into three pieces: a fit term $A(\formula;\theta)$, an intrinsic informativeness term $2^{I(\formula)}$, and a local direction term $\nabla_{\theta}\mathrm{WMC}(\formula;\theta)$. This gives a more nuanced view of semantic learning signals: informativeness appears as a structural factor, while $A(\formula;\theta)$ reflects how much the model currently under- or over-satisfies the constraint relative to the uninformed baseline. The actual update direction then remains determined by the local derivative $\nabla_{\theta}\mathrm{WMC}(\formula;\theta)$. 

The informativeness term again helps explain why likelihood-style formulas can induce strong learning signals. More importantly, by separating constraint strength from model fit and local update direction, the identity makes the latter terms diagnostic of whether a constraint provides useful learning evidence.

\paragraph{Putting the pieces together.}

\update{These results give a partial answer to the cross-entropy puzzle. Likelihood is recovered in \modellog as a conjunction of observed token-prediction events whose satisfaction probability is sequence likelihood (Prop.~\ref{prop:likelihood}). This formula is maximally informative (Prop.~\ref{prop:cemax}), and the gradient results show that informativeness can amplify the semantic-loss signal. Thus, likelihood-style objectives may be effective because they optimize highly informative constraints supplied by text, even though they do not directly encode richer semantic relations.}

\update{At the same time, these results complicate a simple training story in which semantic failures are fixed by replacing likelihood with weaker constraint-based losses: since likelihood-style constraints are already maximally informative, such replacements may not provide stronger pre-training signals. Richer constraints may instead be most useful for identifying which examples or constraint types provide useful learning evidence. Future work can use the gradient decomposition above to study when such constraints provide coherent learning signals, connecting evaluation failures to questions of constraint informativeness, data coverage, selection, and weighting.}

\section{Conclusion}

\update{We introduced \modellog, a declarative probabilistic framework for evaluating pre-trained language models through logical constraints over token-level predictions. Using exact probabilistic inference, \modellog measures graded satisfaction of semantic properties that are difficult to capture with likelihood or accuracy alone. Through \probCT, we showed how this framework can reveal consistency failures in current models, and our formal analysis characterized the structure and informativeness of evaluation constraints, including their connection to semantic-loss gradients. This suggests a broader role for evaluation as a methodology for making model behavior semantically measurable while also informing more interpretable learning objectives.}

\section*{Limitations}

This work has three main limitations. First, \modellog currently focuses on single-token prediction events and evaluates constraints under a factorized product model over local prediction events. This is a deliberate semantic abstraction: the events correspond to separate local queries to the language model, while dependencies among them are introduced declaratively by the constraints rather than assumed to be part of the model’s autoregressive distribution. This keeps inference tractable and enables clean formal analysis, but the current framework does not yet address multi-token spans, variable-length paraphrases, or full sequence-level constraints. Second, our empirical results are intended as small case studies of the framework and are based on synthetic diagnostic tasks generated from knowledge-graph relations, which, despite our efforts at filtering, may still contain noise. These tasks provide controlled tests of declarative constraints, but they do not capture the full complexity of natural language evaluation. Third, while our formal analysis connects probabilistic evaluation to semantic-loss gradients, we do not train models with these losses at scale. The learning discussion should therefore be read as a formal and diagnostic contribution, with actual training left for future work.

\section*{Acknowledgements} \update{We thank Kareem Ahmed, Gregor Betz, Yanai Elazar, Poorva Garg, Ronan Le Bras, William Merrill, Jackson Petty and Sahil Sidheekh for useful feedback at different stages of this work. We also thank Andrew McCallum and the Manning College of Information \& Computer Sciences at the University of Massachusetts Amherst for their support of this project, as well as the UMass Unity Cluster (\url{www.umass.edu/research-computing/unity-research-computing-platform}) for providing GPU compute. We acknowledge that both \texttt{ChatGPT} and \texttt{Claude} were used to improve some of the writing and overall presentation, and to provide feedback on some of the technical results. All mistakes remain our own.}

% Bibliography entries for the entire Anthology, followed by custom entries
%\bibliography{custom}
% % Custom bibliography entries only
\bibliography{custom}

%\newpage
\appendix

\section{Proofs} 
\label{app:proofs}

\maxinformativeness*
\begin{proof}
Let $\formula$ be any satisfiable formula over $n$ variables. Since $\formula$, in virtue of being satisfiable, has at least one satisfying interpretation, it follows that: 
\[
|\mathsf{I}(\formula)| \geq 1.
\]
Under uniform weights,
\[
\mathrm{WMC}_{0}(\formula)
=
\frac{|\mathsf{I}(\formula)|}{2^n}
\geq
2^{-n}.
\]
Applying $-\log_2$ gives
\[
I(\formula)
=
-\log_2 \mathrm{WMC}_{0}(\formula)
\leq n.
\]
The likelihood formula $\formula_{\ell}$ has exactly one satisfying interpretation, namely the all-true assignment, so $\mathrm{WMC}_{0}(\formula_{\ell})=2^{-n}$ and $I(\formula_{\ell})=n$. Equality holds exactly when $|\mathsf{I}(\formula)|=1$.
\end{proof}

\monotonicity*
\begin{proof}
If $\formula_{1} \models \formula_{2}$, then every interpretation satisfying $\formula_{1}$ also satisfies $\formula_{2}$. By definition then  
$
\mathsf{I}(\formula_{1}) \subseteq \mathsf{I}(\formula_{2}).
$
Since $\mathbb{P}_{\theta}(\formula)$ is computed by summing nonnegative interpretation weights over $\mathsf{I}(\formula)$, it follows that:
$$
\mathbb{P}_{\theta}(\formula_{1}) \leq \mathbb{P}_{\theta}(\formula_{2}).
$$
The same subset relation holds under the uniform weighting used by $\mathrm{WMC}_{0}$, giving
$$
\mathrm{WMC}_{0}(\formula_{1}) \leq \mathrm{WMC}_{0}(\formula_{2}).
$$
Applying $-\log_{2}$ reverses the inequality, so that: 
\begin{align*}
I(\formula_{1}) &= -\log_{2}\mathrm{WMC}_{0}(\formula_{1}) \\ 
& \geq -\log_{2}\mathrm{WMC}_{0}(\formula_{2}) = I(\formula_{2}).
\end{align*}
\end{proof}

%% gradient scale
\gradientscale*
\begin{proof}
By applying the chain rule to the semantic loss, we have
\begin{align*}
\nabla_{\theta}\ell_{\mathrm{sl}}(\formula;\theta)
=
-\frac{1}{\mathrm{WMC}(\formula;\theta)}
\nabla_{\theta}\mathrm{WMC}(\formula;\theta).
\end{align*}
Taking norms gives
\begin{align*}
\left\Vert \nabla_{\theta}\ell_{\mathrm{sl}}(\formula;\theta) \right\Vert
=
\frac{1}{\mathrm{WMC}(\formula;\theta)}
\left\Vert \nabla_{\theta}\mathrm{WMC}(\formula;\theta) \right\Vert.
\end{align*}
By assuming then that $\mathrm{WMC}(\formula;\theta) \geq \mathrm{WMC}_{0}(\formula)$, it follows that
\[
\frac{1}{\mathrm{WMC}(\formula;\theta)}
\leq
\frac{1}{\mathrm{WMC}_{0}(\formula)}.
\]
Since $I(\formula)=-\log_2 \mathrm{WMC}_{0}(\formula)$, we have the following equivalence: 
\begin{align*}
\frac{1}{\mathrm{WMC}_{0}(\formula)}
=
2^{I(\formula)}.
\end{align*}
Combining these (in)equalities then yields: 
\[
\left\Vert \nabla_{\theta}\ell_{\mathrm{sl}}(\formula;\theta) \right\Vert
\leq
2^{I(\formula)}
\left\Vert \nabla_{\theta}\mathrm{WMC}(\formula;\theta) \right\Vert .
\]
\end{proof}

\slidentity*
\begin{proof}
This again relies on the chain rule from above for the semantic loss:
$$
\nabla_{\theta}\ell_{\mathrm{sl}}(\formula;\theta)
= -\frac{1}{\mathrm{WMC}(\formula;\theta)} \nabla_{\theta}\mathrm{WMC}(\formula;\theta).
$$
By definition,
\[
A(\formula;\theta)
=
\frac{\mathrm{WMC}_{0}(\formula)}
{\mathrm{WMC}(\formula;\theta)}.
\]
Importantly, the following holds:
\[
\frac{1}{\mathrm{WMC}(\formula;\theta)}
=
\frac{A(\formula;\theta)}
{\mathrm{WMC}_{0}(\formula)}.
\]
Since again we have: 
\[
I(\formula)=-\log_{2}\mathrm{WMC}_{0}(\formula),
\]
and
\[
\frac{1}{\mathrm{WMC}_{0}(\formula)}
=
2^{I(\formula)}.
\]
Substituting these identities into the semantic-loss gradient gives the desired output:
\[
\nabla_{\theta}\ell_{\mathrm{sl}}(\formula;\theta)
=
-
A(\formula;\theta)\,2^{I(\formula)}
\nabla_{\theta}\mathrm{WMC}(\formula;\theta).
\]
\end{proof}

\section{Dataset Generation}
\label{sec:dataset_generation_details}

\subsection{Large-Scale Dataset Generation via Knowledge Graphs}
\label{sec:kbs}

We construct evaluation data for each logical constraint from two
complementary knowledge bases (KBs): Open English WordNet\footnote{%
\url{https://github.com/globalwordnet/english-wordnet}} (OEWN)
\citep{mccrae-etal-2019-english}, which provides a large-scale
lexical hierarchy, and YAGO \citep{suchanek2007yago}, which
contributes physical and historical facts about named entities.

For each constraint type, we extract relational tuples from these KBs
and instantiate them into predefined natural language templates; for
example, a tuple relating an entity $e_1$ to its category $e_2$ is
rendered as ``A $e_1$ is a $e_2$.'' The KB's relational structure
then determines how individual facts are combined into constraint
instances, following the schemas in Table~\ref{tab:pct_examples}.
\textbf{Forward Implication} pairs two facts sharing an entity such
that the KB hierarchy entails one from the other. \textbf{Transitivity}
chains two such hierarchy edges to yield a three-fact example.
\textbf{Mutual Exclusivity} and \textbf{Spatial Mutual Exclusivity}
pair sibling nodes in the KB taxonomy that cannot jointly apply to a
single entity. \textbf{Negation Consistency} pairs a fact with its
negated form under the same templates. This procedure yields a large
candidate pool for each constraint, which is subsequently filtered through an LLM-As-Judge pipeline.

\subsection{LLM-As-Judge Filtering}
\label{sec:llm_judge}

Automatically constructing logical constraint datasets from a knowledge base may introduce numerous artifacts. To address this, we employ an LLM-As-Judge to filter the generated evaluation sets according to a strict criteria. Each extracted datapoint, which consists of a series of automatically extracted facts placed into predefined templates and a constraint between them, is scored according to the following binary criteria.

\begin{table*}[t]
\centering
\begin{tabular}{lrrrrr}
\toprule
Constraint & Raw & Filtered & \texttt{Gemma-3} & \texttt{Llama-3} & \texttt{Qwen3} \\
\midrule
Forward Implication   & 2{,}000 & 1{,}049 & 514 & 369 & 369 \\
Transitivity          & 2{,}000 & 1{,}091 & 486 & 307 & 307 \\
Mutual Exclusivity    & 2{,}000 & 1{,}468 & 975 & 781 & 781 \\
Negation Consistency  & 2{,}000 &    956  & 956 & 856 & 856 \\
Spatial Exclusivity   & 2{,}000 &    915  & 897 & 739 & 739 \\
\bottomrule
\end{tabular}
\caption{Dataset sizes at each filtering stage. \textit{Raw}: initial generated examples. \textit{Filtered}: after LLM-as-judge filtering. \texttt{Gemma-3}, \texttt{Llama-3}, \texttt{Qwen3}: remaining examples after removing multi-token prediction events per model family.}
\label{tab:data_size}
\end{table*}

\vspace{1em}

\begin{table*}[t]
\centering
\begin{tabular}{lcccc}
\toprule
Dataset & Human Accept & LLM Accept & Ann1 Precision & Ann2 Precision \\
\midrule
Forward Implication  & 0.78 & 0.44 & 0.91 & 0.95 \\
Mutual Exclusivity   & 0.84 & 0.76 & 0.95 & 0.97 \\
Negation             & 0.62 & 0.46 & 0.96 & 0.96 \\
Spatial Mutex        & 0.46 & 0.50 & 0.88 & 0.92 \\
Transitivity         & 0.86 & 0.56 & 1.00 & 1.00 \\
\bottomrule
\end{tabular}
\caption{Human validation of the KB to LLM filtering pipeline
on $n=50$ sampled examples per dataset. \textit{Human Accept} is the
fraction accepted by both annotators.
\textit{LLM Accept} is the fraction accepted by \texttt{GPT-5o-mini}.
\textit{Ann1 Precision} and \textit{Ann2 Precision} give the fraction
of LLM-accepted items each annotator independently judged correct
 i.e., the LLM filter's precision as measured by each annotator.}
\label{tab:human_annotation}
\end{table*}

\begin{enumerate}
    \item \textbf{Grammatical Coherence:} All facts must be grammatically natural. Because facts are extracted from knowledge graphs and inserted into predefined templates, grammatical errors and unnatural phrasings can emerge.
    \item \textbf{Monosemous:} All facts must concern entities with a single, unambiguous meaning. Knowledge graphs contain polysemous nodes whose extracted facts may be interpretable under multiple senses (e.g., ``bat'' may refer to the animal or the sports equipment), undermining the logical validity of the constraint.
    \item \textbf{Entity Salience:} All facts must concern well-known, real-world entities. Knowledge graphs sometimes contain overly specific, archaic, or obscure entries that, while technically valid graph nodes, produce evaluation examples that are unreasonable to expect a language model to have encountered during pre-training.
    \item \textbf{Factual Accuracy: } All facts must indeed be true. This controls for any errors that may exist in the knowledge graph.
\end{enumerate}

We use \texttt{GPT-5o-mini} as the LLM judge, and only keep datapoints passing each criteria.

\subsection{Dataset Sizes}
\label{sec:sizes}

To limit our evaluation to single-token substitutions, we additionally apply a final model-specific filtering step to remove any data points whose trigger word is split into multiple tokens.  We provide the size of each dataset in \probCT at each filtering stage in Table \ref{tab:data_size}. This consists of the raw dataset sizes extracted from the knowledge bases, the sizes after LLM-as-judge filtering, and the sizes after removing multi-token prediction events for each model family.

\subsection{Human Validation of LLM Filter}
\label{sec:human_ann}

Table~\ref{tab:human_annotation} reports a human validation study
of the knowledge base to final LLM-filtered dataset. For each dataset, we sample
$n=50$ examples from the knowledge base and collect independent judgments from
two annotators alongside the LLM filter's decision. We report acceptance rates from human annotators through majority vote and from the LLM filter, along with the percentage of LLM-accepted items also accepted by each annotator.

Both annotators estimate LLM precision at $0.88$ or higher on every
dataset. Therefore items retained by the filter are reliably
correct under independent human review, and the underlying judgments
appear consistent across annotators rather than annotator-specific. Moreover, the LLM is generally the stricter filter, retaining a similar or smaller fraction of examples than human annotators.

\section{Inverse Scaling Investigation}
\label{sec:inv_scaling}

\begin{table*}[t]
\centering
\small
\setlength{\tabcolsep}{10pt}
\begin{tabular}{lccccc}
\toprule
Model & $\mathbb{P}_{\theta_{C}}(\propvar_{1})$ & $\mathbb{P}_{\theta_{C}}(\propvar_{2})$ & 
$\Delta$ 
% $\mathbb{P}_{\theta_{C}}(\propvar_{2}){-}\mathbb{P}_{\theta_{C}}(\propvar_{1})$ 
& $\mathbb{P}_{\theta_{C}}(\propvar_{2}{>}\propvar_{1})$ & $\textrm{CAcc}$ \\
\midrule
\texttt{Gemma-3-270M} & \cellcolor{green!8}$0.53 (\pm \textcolor{gray}{0.03})$ & \cellcolor{green!8}$0.70 (\pm \textcolor{gray}{0.02})$ & \cellcolor{green!8}$0.17 (\pm \textcolor{gray}{0.01})$ & \cellcolor{green!8}$64.7 (\pm \textcolor{gray}{0.2})$ & \cellcolor{green!8}$75.4 (\pm \textcolor{gray}{1.2})$ \\
\texttt{Gemma-3-1B-pt} & $0.60 (\pm \textcolor{gray}{0.03})$ & $0.58 (\pm \textcolor{gray}{0.04})$ & $-0.02 (\pm \textcolor{gray}{0.00})$ & $49.5 (\pm \textcolor{gray}{0.6})$ & $57.8 (\pm \textcolor{gray}{3.3})$ \\
\texttt{Gemma-3-4B-pt} & $0.69 (\pm \textcolor{gray}{0.04})$ & $0.50 (\pm \textcolor{gray}{0.04})$ & $-0.19 (\pm \textcolor{gray}{0.00})$ & $34.7 (\pm \textcolor{gray}{0.4})$ & $44.1 (\pm \textcolor{gray}{3.4})$ \\
\texttt{Gemma-3-12B-pt} & \cellcolor{ai2lightpink}$0.70 (\pm \textcolor{gray}{0.04})$ & \cellcolor{ai2lightpink}$0.45 (\pm \textcolor{gray}{0.04})$ & \cellcolor{ai2lightpink}$-0.24 (\pm \textcolor{gray}{0.01})$ & \cellcolor{ai2lightpink}$30.1 (\pm \textcolor{gray}{0.5})$ & \cellcolor{ai2lightpink}$38.3 (\pm \textcolor{gray}{4.3})$ \\
\midrule
\texttt{Llama-3.2-1B} & \cellcolor{green!8}$0.73 (\pm \textcolor{gray}{0.03})$ & \cellcolor{green!8}$0.66 (\pm \textcolor{gray}{0.04})$ & \cellcolor{green!8}$-0.06 (\pm \textcolor{gray}{0.01})$ & \cellcolor{green!8}$45.2 (\pm \textcolor{gray}{0.8})$ & \cellcolor{green!8}$64.3 (\pm \textcolor{gray}{2.8})$ \\
\texttt{Llama-3.2-3B} & $0.67 (\pm \textcolor{gray}{0.04})$ & $0.57 (\pm \textcolor{gray}{0.04})$ & $-0.10 (\pm \textcolor{gray}{0.00})$ & $41.5 (\pm \textcolor{gray}{0.7})$ & $54.5 (\pm \textcolor{gray}{3.7})$ \\
\texttt{Llama-3.1-8B} & \cellcolor{ai2lightpink}$0.70 (\pm \textcolor{gray}{0.03})$ & \cellcolor{ai2lightpink}$0.50 (\pm \textcolor{gray}{0.04})$ & \cellcolor{ai2lightpink}$-0.20 (\pm \textcolor{gray}{0.01})$ & \cellcolor{ai2lightpink}$33.2 (\pm \textcolor{gray}{1.0})$ & \cellcolor{ai2lightpink}$43.3 (\pm \textcolor{gray}{4.1})$ \\
\midrule
\texttt{Qwen3-0.6B} & \cellcolor{green!8}$0.59 (\pm \textcolor{gray}{0.03})$ & \cellcolor{green!8}$0.76 (\pm \textcolor{gray}{0.03})$ & \cellcolor{green!8}$0.17 (\pm \textcolor{gray}{0.00})$ & \cellcolor{green!8}$67.1 (\pm \textcolor{gray}{0.4})$ & \cellcolor{green!8}$80.1 (\pm \textcolor{gray}{2.5})$ \\
\texttt{Qwen3-1.7B} & $0.64 (\pm \textcolor{gray}{0.03})$ & $0.72 (\pm \textcolor{gray}{0.03})$ & $0.08 (\pm \textcolor{gray}{0.00})$ & $60.0 (\pm \textcolor{gray}{0.3})$ & $73.1 (\pm \textcolor{gray}{3.1})$ \\
\texttt{Qwen3-4B} & $0.66 (\pm \textcolor{gray}{0.03})$ & $0.69 (\pm \textcolor{gray}{0.04})$ & $0.03 (\pm \textcolor{gray}{0.01})$ & $54.8 (\pm \textcolor{gray}{0.7})$ & $69.8 (\pm \textcolor{gray}{4.4})$ \\
\texttt{Qwen3-8B} & $0.68 (\pm \textcolor{gray}{0.03})$ & $0.67 (\pm \textcolor{gray}{0.04})$ & $-0.01 (\pm \textcolor{gray}{0.00})$ & $51.3 (\pm \textcolor{gray}{0.8})$ & $67.5 (\pm \textcolor{gray}{4.3})$ \\
\texttt{Qwen3-14B} & \cellcolor{ai2lightpink}$0.65 (\pm \textcolor{gray}{0.04})$ & \cellcolor{ai2lightpink}$0.62 (\pm \textcolor{gray}{0.05})$ & \cellcolor{ai2lightpink}$-0.03 (\pm \textcolor{gray}{0.01})$ & \cellcolor{ai2lightpink}$46.5 (\pm \textcolor{gray}{1.1})$ & \cellcolor{ai2lightpink}$59.8 (\pm \textcolor{gray}{5.2})$ \\
\bottomrule
\end{tabular}
\caption{Inverse scaling analysis for the forward implication $\propvar_{1} \implication \propvar_{2}$ across all model families. $\Delta$ is shorthand for $\mathbb{P}_{\theta_{C}}(\propvar_{2}){-}\mathbb{P}_{\theta_{C}}(\propvar_{1})$, the mean difference between conclusion and premise probabilities. $\mathbb{P}_{\theta_{C}}(\propvar_{2}{>}\propvar_{1})$ is the percentage of points where $\mathbb{P}_{\theta_{C}}(\propvar_{2})>\mathbb{P}_{\theta_{C}}(\propvar_{1})$. Calibration is done using \textit{nucleus entropy} averaged over nucelus values $p \in \{0.80, 0.85, 0.90, 0.95\}$.}
\label{tab:fi_diagnostic}
\end{table*}

%============================================================
% TRANSITIVITY
%============================================================

\begin{table*}[t]
\centering
\small
\setlength{\tabcolsep}{5pt}
\begin{tabular}{lcccccc}
\toprule
Model & $\mathbb{P}_{\theta_{C}}(\propvar_{1})$ & $\mathbb{P}_{\theta_{C}}(\propvar_{2})$ & $\mathbb{P}_{\theta_{C}}(\propvar_{3})$ & 
$\Delta$
% $\mathbb{P}_{\theta_{C}}(\propvar_{3}){-}\mathbb{P}_{\theta_{C}}(\propvar_{1})\mathbb{P}_{\theta_{C}}(\propvar_{2})$ 
& $\mathbb{P}_{\theta_{C}}(\propvar_{3}{>}\propvar_{1}\propvar_{2})$ & $\textrm{CAcc}$ \\
\midrule
\texttt{Gemma-3-270M} & \cellcolor{green!8}$0.48 (\pm \textcolor{gray}{0.03})$ & \cellcolor{green!8}$0.64 (\pm \textcolor{gray}{0.04})$ & \cellcolor{green!8}$0.64 (\pm \textcolor{gray}{0.02})$ & \cellcolor{green!8}$0.34 (\pm \textcolor{gray}{0.02})$ & \cellcolor{green!8}$82.7 (\pm \textcolor{gray}{1.8})$ & \cellcolor{green!8}$77.6 (\pm \textcolor{gray}{1.8})$ \\
\texttt{Gemma-3-1B-pt} & $0.55 (\pm \textcolor{gray}{0.03})$ & $0.61 (\pm \textcolor{gray}{0.04})$ & $0.52 (\pm \textcolor{gray}{0.03})$ & $0.20 (\pm \textcolor{gray}{0.02})$ & $68.5 (\pm \textcolor{gray}{0.7})$ & $64.6 (\pm \textcolor{gray}{0.4})$ \\
\texttt{Gemma-3-4B-pt} & $0.65 (\pm \textcolor{gray}{0.04})$ & $0.71 (\pm \textcolor{gray}{0.04})$ & $0.45 (\pm \textcolor{gray}{0.04})$ & $-0.01 (\pm \textcolor{gray}{0.01})$ & $48.5 (\pm \textcolor{gray}{0.2})$ & $42.6 (\pm \textcolor{gray}{0.3})$ \\
\texttt{Gemma-3-12B-pt} & \cellcolor{ai2lightpink}$0.64 (\pm \textcolor{gray}{0.04})$ & \cellcolor{ai2lightpink}$0.72 (\pm \textcolor{gray}{0.04})$ & \cellcolor{ai2lightpink}$0.40 (\pm \textcolor{gray}{0.04})$ & \cellcolor{ai2lightpink}$-0.05 (\pm \textcolor{gray}{0.02})$ & \cellcolor{ai2lightpink}$42.3 (\pm \textcolor{gray}{0.5})$ & \cellcolor{ai2lightpink}$36.0 (\pm \textcolor{gray}{0.5})$ \\
\midrule
\texttt{Llama-3.2-1B} & \cellcolor{green!8}$0.65 (\pm \textcolor{gray}{0.04})$ & \cellcolor{green!8}$0.84 (\pm \textcolor{gray}{0.03})$ & \cellcolor{green!8}$0.58 (\pm \textcolor{gray}{0.04})$ & \cellcolor{green!8}$0.02 (\pm \textcolor{gray}{0.01})$ & \cellcolor{green!8}$54.0 (\pm \textcolor{gray}{0.6})$ & \cellcolor{green!8}$51.3 (\pm \textcolor{gray}{2.6})$ \\
\texttt{Llama-3.2-3B} & $0.60 (\pm \textcolor{gray}{0.04})$ & $0.82 (\pm \textcolor{gray}{0.03})$ & $0.47 (\pm \textcolor{gray}{0.04})$ & $-0.02 (\pm \textcolor{gray}{0.02})$ & $47.2 (\pm \textcolor{gray}{0.4})$ & $43.5 (\pm \textcolor{gray}{1.7})$ \\
\texttt{Llama-3.1-8B} & \cellcolor{ai2lightpink}$0.62 (\pm \textcolor{gray}{0.05})$ & \cellcolor{ai2lightpink}$0.82 (\pm \textcolor{gray}{0.03})$ & \cellcolor{ai2lightpink}$0.43 (\pm \textcolor{gray}{0.04})$ & \cellcolor{ai2lightpink}$-0.08 (\pm \textcolor{gray}{0.02})$ & \cellcolor{ai2lightpink}$41.9 (\pm \textcolor{gray}{0.7})$ & \cellcolor{ai2lightpink}$33.8 (\pm \textcolor{gray}{1.4})$ \\
\midrule
\texttt{Qwen3-0.6B} & \cellcolor{green!8}$0.50 (\pm \textcolor{gray}{0.03})$ & \cellcolor{green!8}$0.81 (\pm \textcolor{gray}{0.03})$ & \cellcolor{green!8}$0.69 (\pm \textcolor{gray}{0.03})$ & \cellcolor{green!8}$0.28 (\pm \textcolor{gray}{0.01})$ & \cellcolor{green!8}$81.5 (\pm \textcolor{gray}{0.8})$ & \cellcolor{green!8}$75.7 (\pm \textcolor{gray}{0.9})$ \\
\texttt{Qwen3-1.7B} & $0.61 (\pm \textcolor{gray}{0.04})$ & $0.83 (\pm \textcolor{gray}{0.03})$ & $0.66 (\pm \textcolor{gray}{0.04})$ & $0.14 (\pm \textcolor{gray}{0.02})$ & $68.2 (\pm \textcolor{gray}{0.4})$ & $63.0 (\pm \textcolor{gray}{1.6})$ \\
\texttt{Qwen3-4B} & $0.60 (\pm \textcolor{gray}{0.04})$ & $0.81 (\pm \textcolor{gray}{0.03})$ & $0.63 (\pm \textcolor{gray}{0.04})$ & $0.14 (\pm \textcolor{gray}{0.01})$ & $65.9 (\pm \textcolor{gray}{0.3})$ & $59.6 (\pm \textcolor{gray}{3.7})$ \\
\texttt{Qwen3-8B} & \cellcolor{ai2lightpink}$0.64 (\pm \textcolor{gray}{0.05})$ & \cellcolor{ai2lightpink}$0.84 (\pm \textcolor{gray}{0.03})$ & \cellcolor{ai2lightpink}$0.61 (\pm \textcolor{gray}{0.04})$ & \cellcolor{ai2lightpink}$0.07 (\pm \textcolor{gray}{0.01})$ & \cellcolor{ai2lightpink}$59.5 (\pm \textcolor{gray}{0.1})$ & \cellcolor{ai2lightpink}$52.6 (\pm \textcolor{gray}{2.6})$ \\
\texttt{Qwen3-14B} & $0.60 (\pm \textcolor{gray}{0.04})$ & $0.81 (\pm \textcolor{gray}{0.03})$ & $0.58 (\pm \textcolor{gray}{0.04})$ & $0.09 (\pm \textcolor{gray}{0.01})$ & $60.4 (\pm \textcolor{gray}{0.7})$ & $54.9 (\pm \textcolor{gray}{2.5})$ \\
\bottomrule
\end{tabular}
\caption{Inverse scaling analysis for transitivity $(\propvar_{1} \logicand \propvar_{2}) \implication \propvar_{3}$ 
, across all model families. $\Delta$ is shorthand for $\mathbb{P}_{\theta_{C}}(\propvar_{3}){-}\mathbb{P}_{\theta_{C}}(\propvar_{1})\mathbb{P}_{\theta_{C}}(\propvar_{2})$, the mean gap between the conclusion probability and the product of premise probabilities. $\mathbb{P}_{\theta_{C}}(\propvar_{3}{>}\propvar_{1}\propvar_{2})$ is the percentage of points where $\mathbb{P}_{\theta_{C}}(\propvar_{3})>\mathbb{P}_{\theta_{C}}(\propvar_{1})\mathbb{P}_{\theta_{C}}(\propvar_{2})$. Calibration is done using \textit{nucleus entropy} averaged over nucleus values $p \in \{0.80, 0.85, 0.90, 0.95\}$.}
%\label{tab:tr_diagnostic}
\label{tab:tr_diagnostic}
\end{table*}

\begin{table*}[t]
\centering
\begin{tabular}{>{\raggedright\arraybackslash}p{3.5cm} c c cc cc cc}
\toprule
 &  &  & \multicolumn{2}{c}{$\mathbb{P}_{\theta_C}(\propvar_1)$}
 & \multicolumn{2}{c}{$\mathbb{P}_{\theta_C}(\propvar_2)$}
 & \multicolumn{2}{c}{$\rho$} \\
\cmidrule(lr){4-5}\cmidrule(lr){6-7}\cmidrule(lr){8-9}
Prefix & $a$ & $b$ & 270M & 12B & 270M & 12B & 270M & 12B \\
\midrule
Ctenocephalides Felis is a type of \dots & flea & insect & $0.07$ & $0.83$ & $0.78$ & $0.03$ & $0.94$ & $-2.21$ \\
\hline
Satsuma is a type of \dots & mandarin & fruit & $0.03$ & $0.62$ & $0.76$ & $0.12$ & $0.97$ & $-1.17$ \\
\hline
Hesperocyparis Goveniana is a type of \dots & cypress & tree & $0.07$ & $0.84$ & $0.89$ & $0.45$ & $0.97$ & $-0.85$ \\
\hline
Arbalest is a type of \dots & crossbow & weapon & $0.13$ & $0.89$ & $0.51$ & $0.55$ & $0.76$ & $-0.59$ \\
\hline
Pannier is a type of \dots & bag & container & $0.20$ & $0.83$ & $0.58$ & $0.32$ & $0.67$ & $-1.26$ \\
\bottomrule
\end{tabular}
\caption{Examples of inverse scaling for forward implication, comparing
\texttt{Gemma-3-270M} and \texttt{Gemma-3-12B} using \textit{nucleus
entropy} with nucleus value $p=0.95$. Each row defines
$\propvar_1=\mathcal{M}(\text{Prefix},a)$ and
$\propvar_2=\mathcal{M}(\text{Prefix},b)$, where $a$ and $b$ are the
premise and conclusion completions respectively.}
\label{tab:inv_scale_examples}
\end{table*}

We provide additional evidence explaining the inverse scaling phenomenon observed in the Forward Implication and Transitivity datasets in Tables \ref{tab:fi_diagnostic} and \ref{tab:tr_diagnostic}. Across all model families, the smallest model is the most consistent and the largest model is the least consistent, where consistency is measured by aggregate \textrm{CAcc} scores (\texttt{Qwen3} is a minor exception, with the second largest model being slightly more inconsistent than the largest model). In all cases the probabilities assigned to the conclusions, which are always broader consequences of the premises in \probCT, decrease with model size on average. The probabilities assigned to the more narrower premises increases with size on average for the \texttt{Gemma-3} and \texttt{Qwen3} model families, and remain similar across model sizes in the \texttt{Llama-3} family.  Specific examples exhibiting this phenomenon between \texttt{Gemma-3-270M} and \texttt{Gemma-3-12B} are shown in Table \ref{tab:inv_scale_examples}.

\section{Enforcing Factuality}
\label{sec:factuality}

\begin{table*}[t]
\centering
{\footnotesize
\setlength{\tabcolsep}{4pt}

  \begin{tabular}{lccccc}
  \toprule
  Model & Transitivity & Forward Implication & Neg.\ Consistency & Mutual Exclusivity & Spatial Exclusivity \\
  \midrule
  \texttt{Gemma-3-270M} & $40.8 (\pm \textcolor{gray}{6.9})$ & $39.5 (\pm \textcolor{gray}{5.8})$ & $7.5 (\pm \textcolor{gray}{0.4})$ & $64.1 (\pm \textcolor{gray}{0.2})$ & $32.6 (\pm \textcolor{gray}{5.3})$ \\
  \texttt{Gemma-3-1B-pt} & $39.0 (\pm \textcolor{gray}{7.2})$ & $40.0 (\pm \textcolor{gray}{6.7})$ & $10.8 (\pm \textcolor{gray}{0.9})$ & $71.1 (\pm \textcolor{gray}{2.6})$ & $28.8 (\pm \textcolor{gray}{5.0})$ \\
  \texttt{Gemma-3-4B-pt} & $47.5 (\pm \textcolor{gray}{5.6})$ & $42.9 (\pm \textcolor{gray}{6.5})$ & $17.0 (\pm \textcolor{gray}{1.4})$ & $73.9 (\pm \textcolor{gray}{4.0})$ & $41.4 (\pm \textcolor{gray}{6.9})$ \\
  \texttt{Gemma-3-12B-pt} & $45.1 (\pm \textcolor{gray}{7.9})$ & $40.7 (\pm \textcolor{gray}{6.7})$ & $13.9 (\pm \textcolor{gray}{0.9})$ & $74.9 (\pm \textcolor{gray}{4.0})$ & $48.7 (\pm \textcolor{gray}{7.9})$ \\
  \midrule
  \texttt{Llama-3.2-1B} & \cellcolor{green!8}{$64.5 (\pm \textcolor{gray}{4.3})$} & \cellcolor{green!8}{$\mathbf{60.8 (\pm \textcolor{gray}{5.3})}$} & $15.9 (\pm \textcolor{gray}{1.3})$ & \cellcolor{green!8}{$79.7 (\pm \textcolor{gray}{3.6})$} & \cellcolor{green!8}{$\mathbf{52.9 (\pm \textcolor{gray}{6.4})}$} \\
  \texttt{Llama-3.2-3B} & $53.4 (\pm \textcolor{gray}{7.9})$ & $45.9 (\pm \textcolor{gray}{8.0})$ & $13.6 (\pm \textcolor{gray}{1.8})$ & $79.0 (\pm \textcolor{gray}{4.0})$ & $49.3 (\pm \textcolor{gray}{6.6})$ \\
  \texttt{Llama-3.1-8B} & \cellcolor{ai2lightpink} $51.3 (\pm \textcolor{gray}{7.2})$ & \cellcolor{ai2lightpink} $43.9 (\pm \textcolor{gray}{7.8})$ & $\mathbf{20.2 (\pm \textcolor{gray}{2.1})}$ & \cellcolor{ai2lightpink} $78.8 (\pm \textcolor{gray}{3.4})$ & \cellcolor{ai2lightpink} $38.8 (\pm \textcolor{gray}{7.1})$ \\
  \midrule
  \texttt{Qwen3-0.6B} & $58.9 (\pm \textcolor{gray}{5.8})$ & $51.6 (\pm \textcolor{gray}{5.8})$ & $11.3 (\pm \textcolor{gray}{0.7})$ & $73.9 (\pm \textcolor{gray}{0.5})$ & $21.9 (\pm \textcolor{gray}{5.2})$ \\
  \texttt{Qwen3-1.7B} & $65.5 (\pm \textcolor{gray}{5.1})$ & $56.7 (\pm \textcolor{gray}{4.9})$ & \cellcolor{green!8}{$16.3 (\pm \textcolor{gray}{3.1})$} & $77.2 (\pm \textcolor{gray}{2.6})$ & $44.4 (\pm \textcolor{gray}{5.2})$ \\
  \texttt{Qwen3-4B} & $61.3 (\pm \textcolor{gray}{5.3})$ & $54.3 (\pm \textcolor{gray}{4.5})$ & \cellcolor{ai2lightpink} $9.5 (\pm \textcolor{gray}{1.2})$ & $81.9 (\pm \textcolor{gray}{2.2})$ & $35.1 (\pm \textcolor{gray}{7.0})$ \\
  \texttt{Qwen3-8B} & $\mathbf{67.4 (\pm \textcolor{gray}{5.3})}$ & $57.9 (\pm \textcolor{gray}{6.5})$ & $12.4 (\pm \textcolor{gray}{1.7})$ & $\mathbf{82.8 (\pm \textcolor{gray}{2.7})}$ & $33.9 (\pm \textcolor{gray}{5.9})$ \\
  \texttt{Qwen3-14B} & $59.6 (\pm \textcolor{gray}{5.3})$ & $53.3 (\pm \textcolor{gray}{7.9})$ & $13.4 (\pm \textcolor{gray}{2.2})$ & $81.0 (\pm \textcolor{gray}{3.7})$ & $35.3 (\pm \textcolor{gray}{7.4})$ \\
  \bottomrule
  \end{tabular}
}
\caption{Enforcing Factuality: CAcc (\%) over conjunction of true statements using \textit{nucleus entropy} calibration averaged over $p \in \{0.8, 0.85, 0.9, 0.95\}$.}
\label{tab:wmc_joint_discrete}
\end{table*}

Our main results utilize general logical constraints without imposing factuality. However, \probCT is generated from knowledge bases and therefore comes with ground truth factual labels. This allows us to evaluate each dataset under a modified logical constraint that consists of the conjunction of all true statements in a data point. These results are presented in Figure \ref{tab:wmc_joint_discrete}.

\section{Additional Calibration Methods}
\label{sec:other_calibration}

Here we consider alternative choices for the calibration method; that is we consider different choices of the value $C$ in Equation \ref{eq:calibration}. As a reminder, our default method uses the entropy of the top-$p$ nucleus of the local distribution, and results are reproduced in Table \ref{tab:wmc_main_agg_repeat} for ease of comparison. \textbf{Entropy calibration}, shown in Table \ref{tab:wmc_cal_entropy}, sets $C$ to be the entropy of the full next-token distribution, dropping the nucleus restriction. \textbf{Nucleus calibration}, shown in Table \ref{tab:wmc_cal_nucleus}, sets $C$ to be the log count of tokens in the top-$p$ nucleus, replacing entropy with a uniform count over the same set. 

Both alternatives generally preserve the qualitative findings of Table \ref{tab:wmc_main_agg}: models generally perform poorly, inverse scaling on Transitivity and Forward Implication is present across model families, and the task difficulty ordering is unchanged. Spatial Exclusivity is the most sensitive to calibration method, with meaningfully improved scores here compared to nucleus entropy calibration across model families and sizes. For example, \texttt{Gemma-3-12B} has an average score of $18.5$ for nucleus entropy calibration, while it has scores of $59.4$ and $44.4$ for nucleus calibration and entropy calibration respectively. Scores on Negation Consistency remain similarly poor across model families and calibration methods. 

\begin{table*}[t]
\centering
{\footnotesize
\setlength{\tabcolsep}{4pt}

  \begin{tabular}{lccccc}
  \toprule
  Model & Transitivity & Forward Implication & Neg.\ Consistency & Mutual Exclusivity & Spatial
  Exclusivity \\
  \midrule
  \texttt{Gemma-3-270M} & \cellcolor{green!10}{$\mathbf{81.0} (\pm \textcolor{gray}{2.3})$} &
  \cellcolor{green!8}{$79.7 (\pm \textcolor{gray}{0.5})$} & $18.5 (\pm \textcolor{gray}{1.4})$ &
  $48.7 (\pm \textcolor{gray}{0.8})$ & $18.5 (\pm \textcolor{gray}{4.8})$ \\
  \texttt{Gemma-3-1B-pt} & $69.0 (\pm \textcolor{gray}{1.5})$ & $65.1 (\pm \textcolor{gray}{1.5})$ & $15.9
  (\pm \textcolor{gray}{0.8})$ & $57.3 (\pm \textcolor{gray}{3.9})$ & $12.4 (\pm
  \textcolor{gray}{4.8})$ \\
  \texttt{Gemma-3-4B-pt} & $46.9 (\pm \textcolor{gray}{1.1})$ & $50.8 (\pm \textcolor{gray}{1.7})$ & $18.2
  (\pm \textcolor{gray}{0.9})$ & $56.8 (\pm \textcolor{gray}{5.2})$ & $19.9 (\pm
  \textcolor{gray}{6.6})$ \\
  \texttt{Gemma-3-12B-pt} & \cellcolor{ai2lightpink} $40.4 (\pm \textcolor{gray}{1.6})$ & \cellcolor{ai2lightpink} $45.2 (\pm \textcolor{gray}{2.0})$ & $19.2
  (\pm \textcolor{gray}{0.9})$ & $54.7 (\pm \textcolor{gray}{5.4})$ & $23.8 (\pm
  \textcolor{gray}{7.7})$ \\
  \midrule
  \texttt{Llama-3.2-1B} & \cellcolor{green!8}{$52.7 (\pm \textcolor{gray}{2.4})$} &
  \cellcolor{green!8}{$68.7 (\pm \textcolor{gray}{1.8})$} & \cellcolor{green!8}{$\mathbf{23.1}
  (\pm \textcolor{gray}{2.1})$} & \cellcolor{green!8}{$64.0 (\pm \textcolor{gray}{3.8})$} &
  \cellcolor{green!8}{$\mathbf{31.5} (\pm \textcolor{gray}{7.4})$} \\
  \texttt{Llama-3.2-3B} & $45.5 (\pm \textcolor{gray}{1.3})$ & $60.8 (\pm \textcolor{gray}{2.1})$ & \cellcolor{ai2lightpink} $21.6
  (\pm \textcolor{gray}{2.6})$ & $61.6 (\pm \textcolor{gray}{6.0})$ & $25.3 (\pm
  \textcolor{gray}{6.9})$ \\
  \texttt{Llama-3.1-8B} & \cellcolor{ai2lightpink} $35.2 (\pm \textcolor{gray}{1.5})$ & \cellcolor{ai2lightpink} $50.0 (\pm \textcolor{gray}{1.2})$ & $22.7
  (\pm \textcolor{gray}{1.3})$ & \cellcolor{ai2lightpink} $60.0 (\pm \textcolor{gray}{5.9})$ & \cellcolor{ai2lightpink} $16.1 (\pm
  \textcolor{gray}{6.4})$ \\
  \midrule
  \texttt{Qwen3-0.6B} & \cellcolor{green!8}{$76.4 (\pm \textcolor{gray}{0.5})$} &
  \cellcolor{green!8}{$\mathbf{82.8} (\pm \textcolor{gray}{1.5})$} & $17.2 (\pm
  \textcolor{gray}{0.6})$ & $58.2 (\pm \textcolor{gray}{0.7})$ & $9.1 (\pm \textcolor{gray}{3.2})$
  \\
  \texttt{Qwen3-1.7B} & $64.2 (\pm \textcolor{gray}{1.2})$ & $75.3 (\pm \textcolor{gray}{1.6})$ & $18.9 (\pm
  \textcolor{gray}{3.1})$ & $62.9 (\pm \textcolor{gray}{2.1})$ & $25.9 (\pm \textcolor{gray}{7.2})$
  \\
  \texttt{Qwen3-4B} & $60.8 (\pm \textcolor{gray}{3.2})$ & $73.0 (\pm \textcolor{gray}{2.6})$ & $19.3 (\pm
  \textcolor{gray}{0.6})$ & $67.7 (\pm \textcolor{gray}{5.0})$ & $16.6 (\pm \textcolor{gray}{5.1})$
  \\
  \texttt{Qwen3-8B} & \cellcolor{ai2lightpink} $53.7 (\pm \textcolor{gray}{2.2})$ & $69.6 (\pm \textcolor{gray}{2.7})$ & $20.8 (\pm
  \textcolor{gray}{0.1})$ & $\mathbf{68.2} (\pm \textcolor{gray}{4.3})$ & $14.8 (\pm
  \textcolor{gray}{5.3})$ \\
  \texttt{Qwen3-14B} & $56.4 (\pm \textcolor{gray}{2.1})$ & \cellcolor{ai2lightpink} $64.1 (\pm \textcolor{gray}{3.6})$ & $20.9 (\pm
  \textcolor{gray}{1.2})$ & $65.9 (\pm \textcolor{gray}{5.6})$ & $12.6 (\pm \textcolor{gray}{5.4})$
  \\
  \bottomrule
  \end{tabular}
}
\caption{CAcc (\%) using \textit{nucleus entropy} calibration averaged over
nucleus values $p \in \{0.8, 0.85, 0.9, 0.95\}$. Reproduced from Table~\ref{tab:wmc_main_agg} for ease of comparison with the alternative calibrations in Tables~\ref{tab:wmc_cal_entropy} and~\ref{tab:wmc_cal_nucleus}}
\label{tab:wmc_main_agg_repeat}
\end{table*}

\begin{table*}[t]
\centering
{\footnotesize
\setlength{\tabcolsep}{4pt}
  \begin{tabular}{lccccc}
  \toprule
  Model & Transitivity & Forward Implication & Neg.\ Consistency & Mutual Exclusivity & Spatial Exclusivity \\
  \midrule
  \texttt{Gemma-3-270M}  & \cellcolor{green!8}{$79.2$} & \cellcolor{green!8}{$81.5$} & \cellcolor{green!8}{$\mathbf{20.9}$} & $47.5$ & $31.4$ \\
  \texttt{Gemma-3-1B-pt} & $65.2$ & $68.9$ & $18.3$ & $65.5$ & $26.5$ \\
  \texttt{Gemma-3-4B-pt} & $48.1$ & $56.8$ & \cellcolor{ai2lightpink} $17.9$ & $68.8$ & $37.0$ \\
  \texttt{Gemma-3-12B-pt} & \cellcolor{ai2lightpink} $40.5$ & \cellcolor{ai2lightpink} $52.1$ & $18.3$ & $72.0$ & $44.4$ \\
  \midrule
  \texttt{Llama-3.2-1B} & \cellcolor{green!8}{$58.3$} & \cellcolor{green!8}{$75.6$} & $17.6$ & $72.6$ & \cellcolor{green!8}{$\mathbf{50.9}$} \\
  \texttt{Llama-3.2-3B} & $50.8$ & $66.1$ & $16.1$ & $74.9$ & $45.5$ \\
  \texttt{Llama-3.1-8B} & \cellcolor{ai2lightpink} $43.0$ & \cellcolor{ai2lightpink} $56.6$ & $18.3$ & $75.0$ & \cellcolor{ai2lightpink} $34.2$ \\
  \midrule
  \texttt{Qwen3-0.6B} & \cellcolor{green!8}{$\mathbf{79.8}$} & \cellcolor{green!8}{$\mathbf{85.9}$} & $15.5$ & $56.6$ & $19.4$ \\
  \texttt{Qwen3-1.7B} & $69.7$ & $79.7$ & $13.7$ & $68.5$ & $41.9$ \\
  \texttt{Qwen3-4B} & $68.1$ & $78.0$ & $17.2$ & $76.1$ & $32.1$ \\
  \texttt{Qwen3-8B} & $62.5$ & $75.6$ & $19.2$ & $\mathbf{78.9}$ & $30.2$ \\
  \texttt{Qwen3-14B} & \cellcolor{ai2lightpink} $61.2$ & \cellcolor{ai2lightpink} $70.5$ & $18.7$ & $78.0$ & $30.7$ \\
  \bottomrule
  \end{tabular}
}
\caption{CAcc (\%) using \textit{entropy} calibration.}
\label{tab:wmc_cal_entropy}
\end{table*}

\begin{table*}[t]
\centering
{\footnotesize
\setlength{\tabcolsep}{4pt}
  \begin{tabular}{lccccc}
  \toprule
  Model & Transitivity & Forward Implication & Neg.\ Consistency & Mutual Exclusivity & Spatial Exclusivity \\
  \midrule
  \texttt{Gemma-3-270M}  & \cellcolor{green!8}{$77.8 (\pm \textcolor{gray}{1.5})$} & \cellcolor{green!8}{$83.7 (\pm \textcolor{gray}{2.2})$} & $17.0 (\pm \textcolor{gray}{3.3})$ & $42.2 (\pm \textcolor{gray}{5.7})$ & $55.3 (\pm \textcolor{gray}{13.1})$ \\
  \texttt{Gemma-3-1B-pt} & $68.0 (\pm \textcolor{gray}{3.6})$ & $74.2 (\pm \textcolor{gray}{5.8})$ & \cellcolor{green!8}{$\mathbf{17.4 (\pm \textcolor{gray}{2.4})}$} & $63.4 (\pm \textcolor{gray}{1.4})$ & $42.9 (\pm \textcolor{gray}{15.0})$ \\
  \texttt{Gemma-3-4B-pt} & $53.2 (\pm \textcolor{gray}{7.4})$ & $63.7 (\pm \textcolor{gray}{7.7})$ & \cellcolor{ai2lightpink} $15.4 (\pm \textcolor{gray}{3.5})$ & $72.2 (\pm \textcolor{gray}{4.7})$ & $52.9 (\pm \textcolor{gray}{16.5})$ \\
  \texttt{Gemma-3-12B-pt} & \cellcolor{ai2lightpink} $45.4 (\pm \textcolor{gray}{7.7})$ & \cellcolor{ai2lightpink} $56.5 (\pm \textcolor{gray}{8.2})$ & $17.2 (\pm \textcolor{gray}{2.3})$ & $71.8 (\pm \textcolor{gray}{6.2})$ & $59.4 (\pm \textcolor{gray}{17.1})$ \\
  \midrule
  \texttt{Llama-3.2-1B} & \cellcolor{green!8}{$68.6 (\pm \textcolor{gray}{8.5})$} & \cellcolor{green!8}{$82.0 (\pm \textcolor{gray}{5.6})$} & $11.6 (\pm \textcolor{gray}{4.9})$ & $70.1 (\pm \textcolor{gray}{1.5})$ & \cellcolor{green!8}{$\mathbf{68.6 (\pm \textcolor{gray}{12.9})}$} \\
  \texttt{Llama-3.2-3B} & $56.0 (\pm \textcolor{gray}{7.8})$ & $73.6 (\pm \textcolor{gray}{6.7})$ & $12.3 (\pm \textcolor{gray}{3.9})$ & $76.4 (\pm \textcolor{gray}{4.3})$ & $59.5 (\pm \textcolor{gray}{14.7})$ \\
  \texttt{Llama-3.1-8B} & \cellcolor{ai2lightpink} $50.5 (\pm \textcolor{gray}{9.6})$ & \cellcolor{ai2lightpink} $64.4 (\pm \textcolor{gray}{10.0})$ & $13.4 (\pm \textcolor{gray}{5.1})$ & $75.1 (\pm \textcolor{gray}{6.4})$ & \cellcolor{ai2lightpink} $47.3 (\pm \textcolor{gray}{17.7})$ \\
  \midrule
  \texttt{Qwen3-0.6B} & \cellcolor{green!8}{$\mathbf{82.2 (\pm \textcolor{gray}{3.7})}$} & \cellcolor{green!8}{$\mathbf{90.0 (\pm \textcolor{gray}{3.4})}$} & $9.8 (\pm \textcolor{gray}{3.3})$ & $48.3 (\pm \textcolor{gray}{8.0})$ & $46.0 (\pm \textcolor{gray}{14.7})$ \\
  \texttt{Qwen3-1.7B} & $77.7 (\pm \textcolor{gray}{6.2})$ & $85.5 (\pm \textcolor{gray}{4.0})$ & $8.5 (\pm \textcolor{gray}{3.2})$ & $62.4 (\pm \textcolor{gray}{4.5})$ & $61.9 (\pm \textcolor{gray}{11.8})$ \\
  \texttt{Qwen3-4B} & $74.4 (\pm \textcolor{gray}{6.3})$ & $83.9 (\pm \textcolor{gray}{4.2})$ & $11.6 (\pm \textcolor{gray}{4.5})$ & $71.2 (\pm \textcolor{gray}{4.1})$ & $55.2 (\pm \textcolor{gray}{17.2})$ \\
  \texttt{Qwen3-8B} & $70.8 (\pm \textcolor{gray}{8.0})$ & $82.0 (\pm \textcolor{gray}{5.5})$ & $12.7 (\pm \textcolor{gray}{4.4})$ & $75.7 (\pm \textcolor{gray}{1.2})$ & $48.0 (\pm \textcolor{gray}{16.0})$ \\
  \texttt{Qwen3-14B} & \cellcolor{ai2lightpink} $69.9 (\pm \textcolor{gray}{7.2})$ & \cellcolor{ai2lightpink} $79.5 (\pm \textcolor{gray}{6.5})$ & $13.2 (\pm \textcolor{gray}{4.7})$ & $\mathbf{78.7 (\pm \textcolor{gray}{3.4})}$ & $46.6 (\pm \textcolor{gray}{18.9})$ \\
  \bottomrule
  \end{tabular}
}
\caption{CAcc (\%) using \textit{nucleus} calibration averaged over nucleus values
$p \in \{0.8, 0.85, 0.9, 0.95\}$.}
\label{tab:wmc_cal_nucleus}
\end{table*}

\clearpage
\twocolumn
\section{Additional Probabilistic Consistency Histograms}
\label{sec:histograms}

\vspace{1em}
\begin{minipage}{\textwidth}
\vspace*{\fill}
\centering

\wmcplotrow{Gemma-3-270M}{cal_nucleus_entropy095}
\wmcplotrow{Gemma-3-1B-pt}{cal_nucleus_entropy095}
\wmcplotrow{Gemma-3-4B-pt}{cal_nucleus_entropy095}
\wmcplotrowlabeled{Gemma-3-12B-pt}{cal_nucleus_entropy095}

\captionof{figure}{WMC score distributions for \texttt{Gemma-3} base models  using \textit{nucleus entropy} calibration with nucleus value $p=0.95$. Points highlighted in red are inconsistent, while points highlighted in blue are consistent. The solid vertical black line corresponds to the average probabilistic consistency. Each figure is labeled with the corresponding aggregate \textrm{CAcc} score.}
\label{fig:gemma_pretrained_constraints}
\vspace*{\fill}
\end{minipage}

\vspace{3em}
\begin{minipage}{\textwidth}
\centering

\wmcplotrow{Llama-3.2-1B}{cal_nucleus_entropy095}
\wmcplotrow{Llama-3.2-3B}{cal_nucleus_entropy095}
\wmcplotrowlabeled{Llama-3.1-8B}{cal_nucleus_entropy095}

\captionof{figure}{Probabilistic Consistency score distributions for \texttt{Llama-3} pretrained models using \textit{nucleus entropy} calibration with nucleus value $p=0.95$. Points highlighted in red are inconsistent, while points highlighted in blue are consistent. The solid vertical black line corresponds to the average probabilistic consistency. Each figure is labeled with the corresponding aggregate \textrm{CAcc} score.}
\label{fig:llama_pretrained_constraints}
\end{minipage}

\clearpage
\vspace{1em}
\begin{minipage}{\textwidth}
\centering

\wmcplotrow{Qwen3-0.6B}{cal_nucleus_entropy095}
\wmcplotrow{Qwen3-1.7B}{cal_nucleus_entropy095}
\wmcplotrow{Qwen3-4B}{cal_nucleus_entropy095}
\wmcplotrow{Qwen3-8B}{cal_nucleus_entropy095}
\wmcplotrowlabeled{Qwen3-14B}{cal_nucleus_entropy095}

\captionof{figure}{Probabilistic Consistency score distributions for \texttt{Qwen3} pretrained models using \textit{nucleus entropy} calibration with nucleus value $p=0.95$. Points highlighted in red are inconsistent, while points highlighted in blue are consistent. The solid vertical black line corresponds to the average probabilistic consistency. Each figure is labeled with the corresponding aggregate \textrm{CAcc} score.}
\label{fig:qwen_base_constraints}
\end{minipage}

\end{document}